
\documentclass[10pt,twocolumn,letterpaper]{article}

\usepackage{cvpr}              

%
%

\usepackage{graphicx}	
\usepackage{amsmath}	
\usepackage{amssymb}	
\usepackage{booktabs}
\usepackage{times}
\usepackage{microtype}
\usepackage{marvosym}
\usepackage{epsfig}
\usepackage{caption}
\usepackage{float}
\usepackage{placeins}
\usepackage{color, colortbl}
\usepackage{xcolor}
\usepackage{soul}
\usepackage{stfloats}
\usepackage{enumitem}
\usepackage{tabularx}
\usepackage{xstring}
\usepackage{multirow}
\usepackage{xspace}
\usepackage{url}
\usepackage{subcaption}
\usepackage[hang,flushmargin]{footmisc}
\usepackage{bm}
\usepackage{amsthm}
\usepackage{algorithm}
\usepackage{algpseudocode}
\usepackage{pgfplots}

%
\definecolor{cvprblue}{rgb}{0.21,0.49,0.74}
\usepackage[pagebackref,breaklinks,colorlinks,allcolors=cvprblue]{hyperref}


\title{FIMA-Q: Post-Training Quantization for Vision Transformers by Fisher Information Matrix Approximation}

\author{Zhuguanyu Wu$^{1,2 *}$, 
    Shihe Wang$^{1,2 *}$, 
    Jiayi Zhang$^{1,2}$,  
    Jiaxin Chen$^{1,2}$\textsuperscript{\Letter}, Yunhong Wang$^{1,2}$\textsuperscript{\Letter} \\
    $^1$State Key Laboratory of Virtual Reality Technology and Systems, Beihang University, China \\
    $^2$School of Computer Science and Engineering, Beihang University, Beijing, China \\
    {\tt\small \{goatwu, shihewang, zhangjyi, jiaxinchen, yhwang\}@buaa.edu.cn}
}

\newtheorem{theory}{Theorem}[section]
\newtheorem{definition}{Definition}

\newtheorem{corollary}{Corollary}[section]
\pgfplotsset{compat=1.18}
\begin{document}
\maketitle
\renewcommand{\thefootnote}{}

\begin{abstract}

Post-training quantization (PTQ) has stood out as a cost-effective and promising model compression paradigm in recent years, as it avoids computationally intensive model retraining. Nevertheless, current PTQ methods for Vision Transformers (ViTs) still suffer from significant accuracy degradation, especially under low-bit quantization. To address these shortcomings, we analyze the prevailing Hessian-guided quantization loss, and uncover certain limitations of conventional Hessian approximations. By following the block-wise reconstruction framework, we propose a novel PTQ method for ViTs, dubbed FIMA-Q. Specifically, we firstly establish the connection between KL divergence and FIM, which enables fast computation of the quantization loss during reconstruction. We further propose an efficient FIM approximation method, namely DPLR-FIM, by employing the diagonal plus low-rank principle, and formulate the ultimate quantization loss. Our extensive experiments, conducted across various vision tasks with representative ViT-based architectures on public datasets, demonstrate that our method substantially promotes the accuracy compared to the state-of-the-art approaches, especially in the case of low-bit quantization. The source code is available at \url{https://github.com/ShiheWang/FIMA-Q}.
\end{abstract}  
\footnote{\ $^*$\  Equal contribution.}
\footnote{\textsuperscript{\Letter} Corresponding author.}
\renewcommand{\thefootnote}{\arabic{footnote}}
\addtocounter{footnote}{-2}

\section{Introduction}
\label{sec:intro}

\begin{figure}[t]
    \centering
    \begin{subfigure}{.49\columnwidth}
        \centering
        \includegraphics[width=1\linewidth]{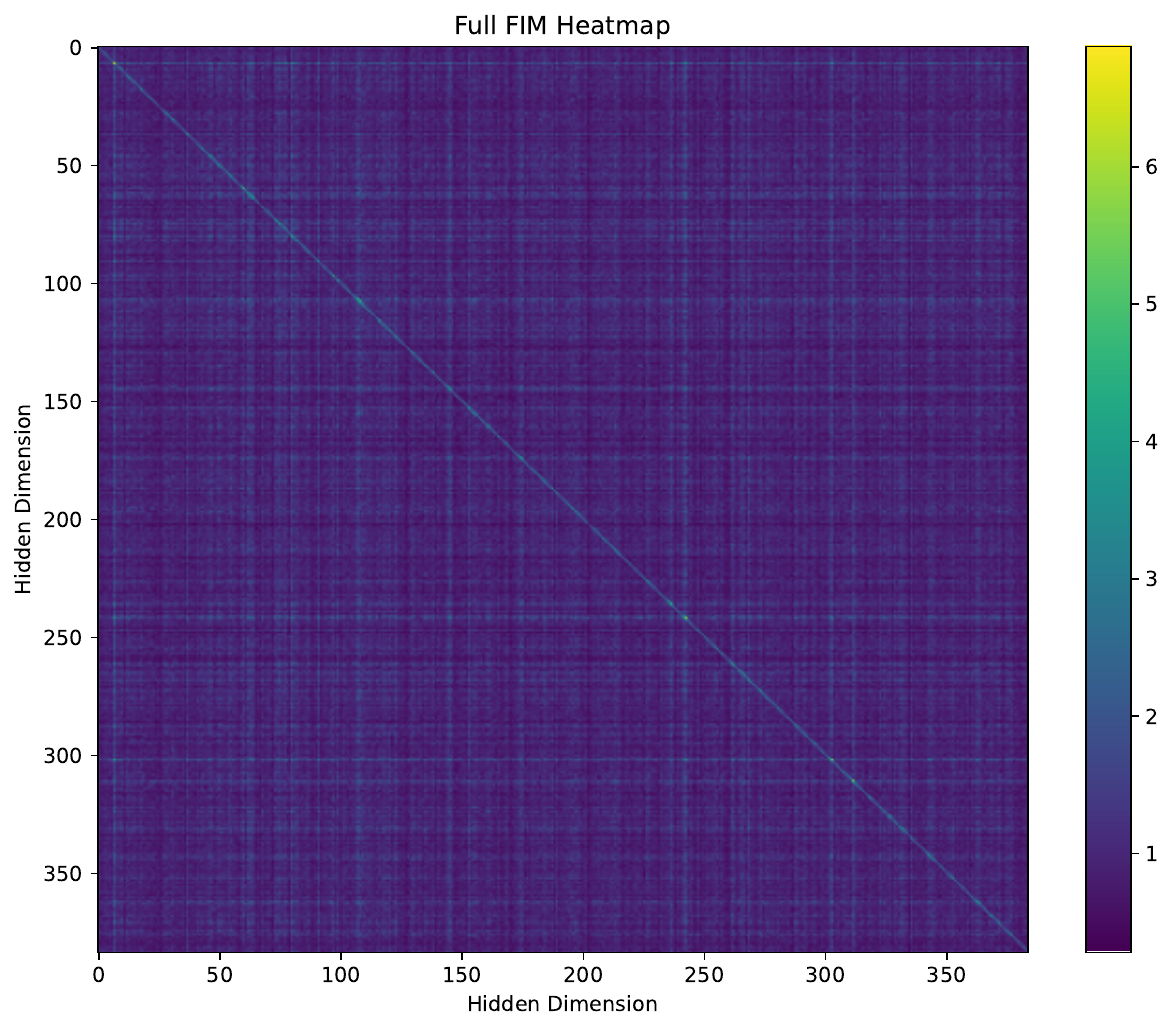}
        \caption{Complete FIM}
        \label{fig:fim_sub1}
    \end{subfigure}%
    \hfill
    \begin{subfigure}{.49\columnwidth}
        \centering
        \includegraphics[width=1\linewidth]{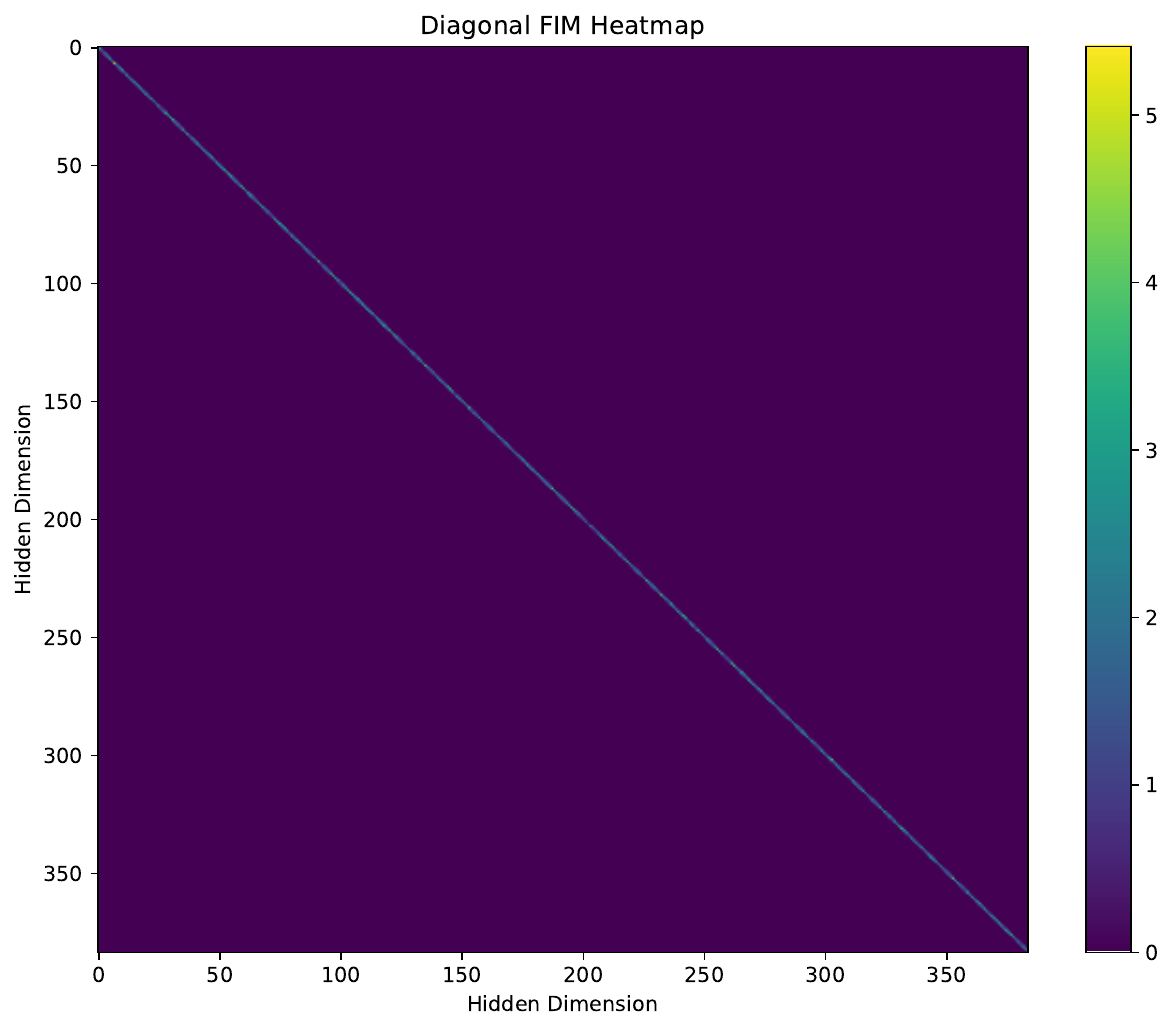}
        \caption{Diagonal FIM}
        \label{fig:fim_sub2}
    \end{subfigure}%
    \hfill
    \begin{subfigure}{.49\columnwidth}
        \centering
        \includegraphics[width=1\linewidth]{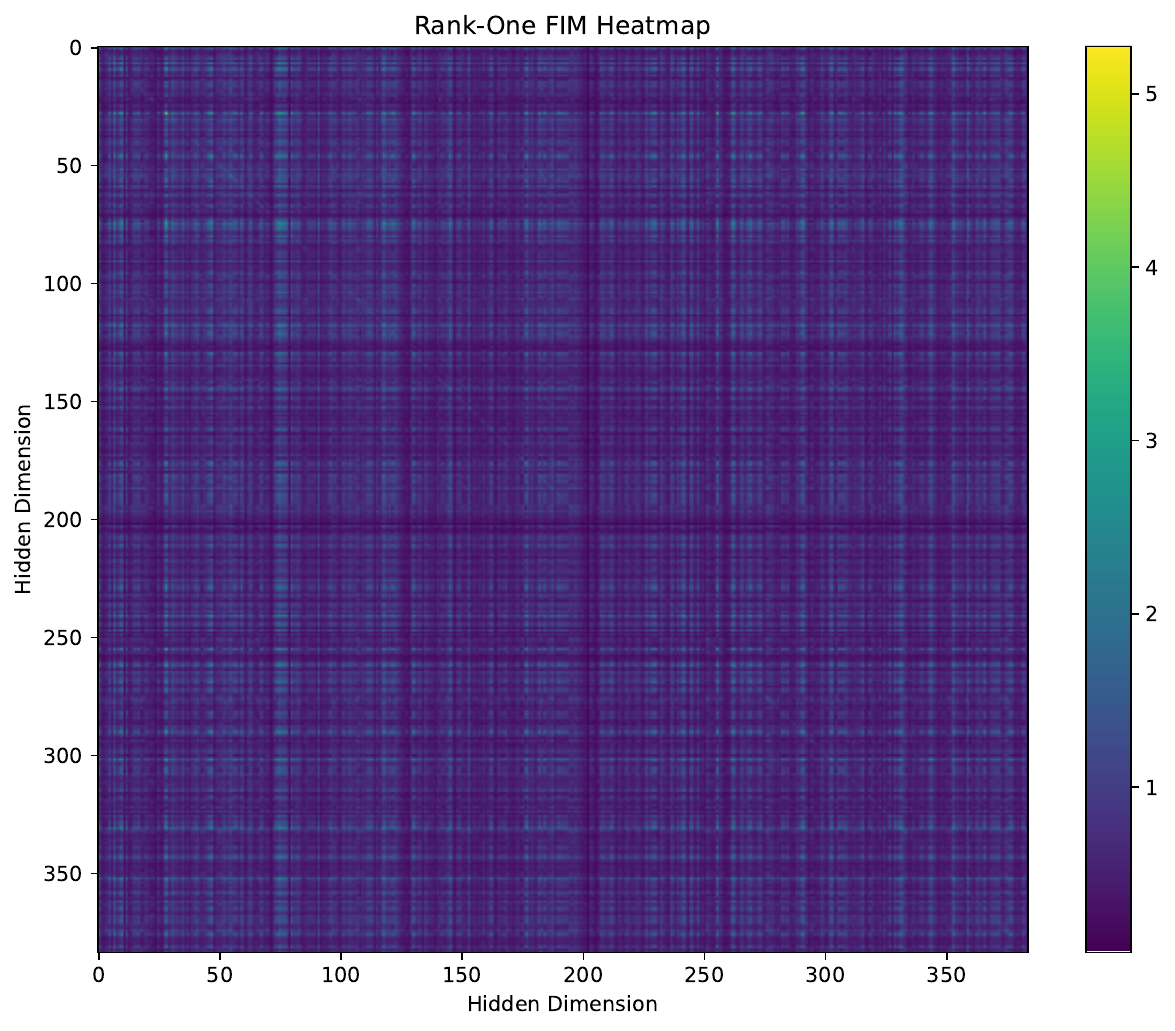}
        \caption{Low-Rank FIM}
        \label{fig:fim_sub3}
    \end{subfigure}
    \hfill
    \begin{subfigure}{.49\columnwidth}
        \centering
        \includegraphics[width=1\linewidth]{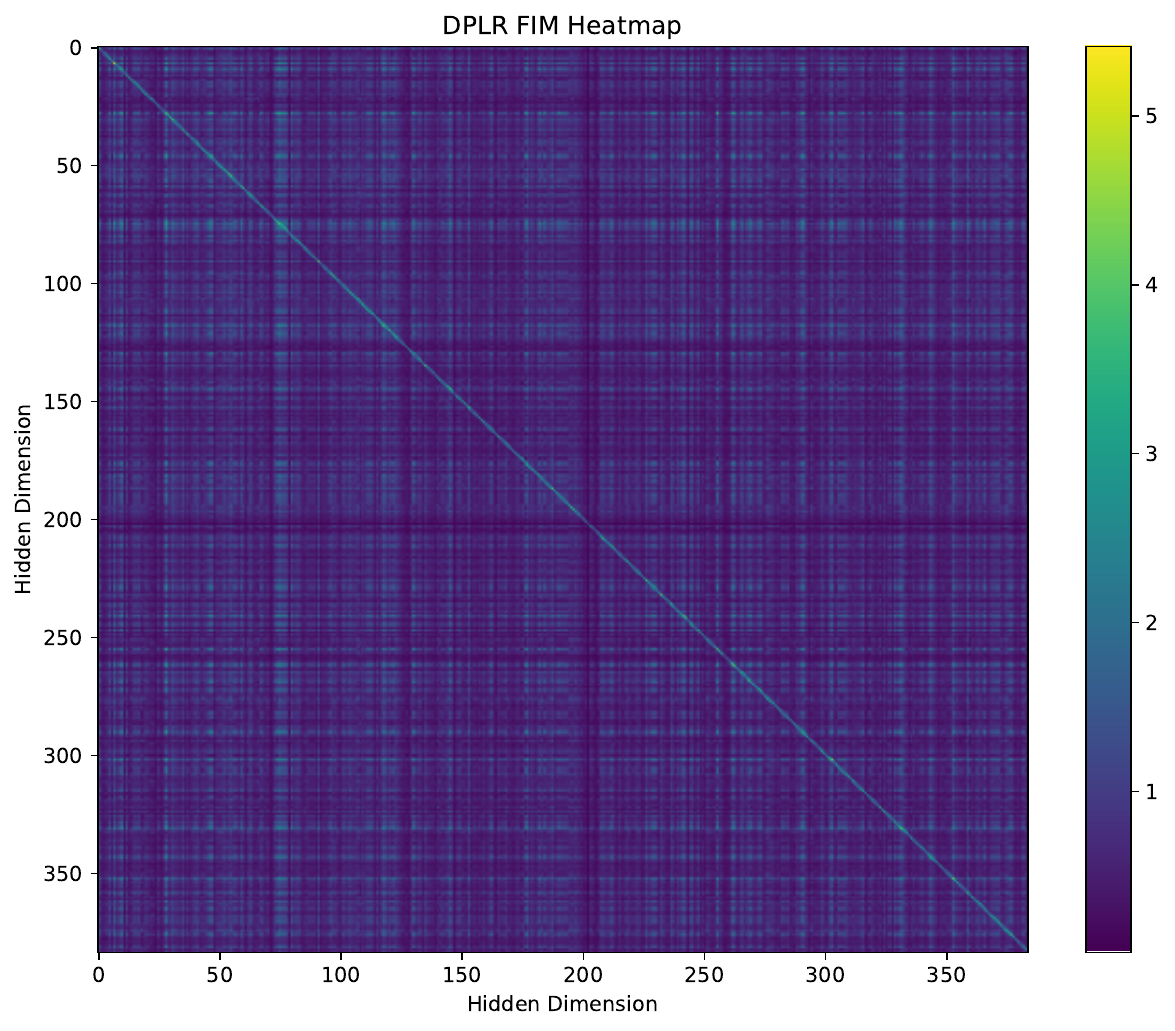}
        \caption{DPLR-FIM}
        \label{fig:fim_sub4}
    \end{subfigure}
    \caption{Illustration on the heatmap of FIM for the class token. (a) shows the complete FIM, where the diagonal elements are notably prominent, and the off-diagonal elements are also non-negligible. (b) and (c) display the diagonal approximation and low-rank approximation on FIM, respectively, both of which omit critical information from the complete FIM. (d) shows the proposed diagonal plus low-rank approximation on FIM, which clearly achieves improvement over compared approaches.}
    \label{fig:fim}
\end{figure}

By leveraging the powerful self-attention mechanism \cite{gpt3,bert}, Vision Transformers (ViTs) have recently established themselves as a fundamental architecture in the computer vision community \cite{vit,detr,transforming,Octr,Cat-det}, challenging the longstanding dominance of Convolutional Neural Networks (CNNs) \cite{alexnet,vgg,resnet}. However, the improved performance comes at the cost of larger model sizes, more parameters, and increased computational overhead. In order to facilitate the deployment of ViTs on resource-constrained hardware, substantial research efforts have been devoted to developing model compression techniques.

Model quantization, as one of the most effective model compression methods, aims to convert floating-point weights or activations to low-bit integers, thereby reducing memory consumption and computational cost \cite{whitepaper}. However, this process inevitably introduces quantization loss, leading to accuracy degradation. Conventional approaches employ Quantization-Aware Training (QAT) \cite{learnstep,packqvit,qvit} to recover performance through end-to-end retraining. Despite the significantly improved accuracy, QAT methods often requires training on the entire pretraining dataset, incurring prohibitive computational costs. Consequently, Post-Training Quantization (PTQ) methods, which calibrate quantization parameters on a small unlabeled data, have emerged as a promising alternative \cite{adaround,BRECQ,qdrop}.

While current PTQ methods have delivered promising performance for CNNs, they suffer from significant performance degradation when applied to ViTs, primarily due to the imbalanced and asymmetric activation distributions of ViTs. Although numerous studies have developed specialized quantizers to accommodate the unique structure and distribution of ViTs, these approaches still underperform at low bit-widths \cite{adalog,RepQViT,PTQ4ViT}. This propels us to examine the fundamental aspects of existing PTQ methods, and observe that most of them rely on a loss function to guide parameter tuning for minimizing quantization error. Consequently, the formulation and measurement of quantization loss substantially influence the performance of quantized models.  

Motivated by our observation that many existing methods essentially employ Hessian-guided quantization loss \citep{regptq,BRECQ,PTQ4ViT,APQViT}, we investigate the construction of the foundational quantization loss through Fisher Information Matrix (FIM) approximation in this work. Considering the computational intractability of exact Hessian computation, we explore viable approximation strategies. The existing diagonal approximation method, introduced by BRECQ \citep{BRECQ}, serves as a candidate, initially replacing the Hessian with FIM, followed by a diagonal approximation.

\begin{figure*}[t]
    \centering
    \includegraphics[width=1\linewidth]{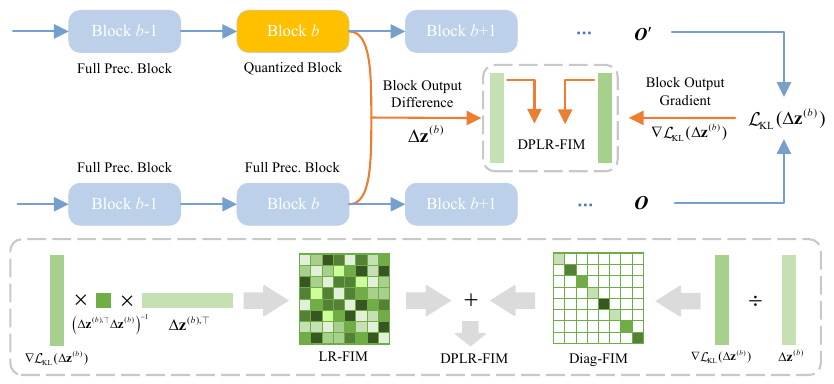}
    \caption{Framework overview of the proposed FIMA-Q method. We follow the block-wise quantization pipeline. For each block, we compute the difference between outputs before and after quantization, perform forward propagation through the rest of the networks, calculate the KL divergence, and conduct backward propagation to obtain the gradients. Based on the output difference and gradients, we compute the low-rank approximation and diagonal approximation on FIM respectively, and combine them to formulate the DPLR-FIM loss for quantization reconstruction.}
    \label{fig:overview}
\end{figure*}

However, our in-depth analysis of FIM approximation reveal two critical findings: 1) As validated in Table~\ref{tab:diff_a}, the diagonal approximation proposed by BRECQ achieves inferior performance compared to simple mean square error (MSE) loss on ViTs. This is particularly surprising since the MSE loss, being agnostic to task-specific objectives, should theoretically underperform Hessian-based methods in case of block-wise quantization. 2) As shown in Fig.~\ref{fig:fim} (a), the visualization of the complete FIM for the class token implies that both the diagonal elements and the off-diagonal inter-token correlations of FIM are important. This finding suggests that the prevailing diagonal approximation approach discards potentially valuable off-diagonal components that may be crucial for maintaining the performance.

To address these issues, we conduct a rigorous analysis of FIM approximation. Our investigation challenges the hypothesis of BRECQ, revealing that FIM is linearly proportional to the gradient of the Kullback-Leibler (KL) divergence, instead of the square of the gradient as BRECQ posits. Building upon this insight, we firstly propose a rank-one approximation method that preserves critical off-diagonal correlations and extend it to a low-rank approximation. Subsequently, inspired by the diagonal plus low-rank approximation, we further incorporate the diagnal and low-rank components, formulating a more concise estimation on FIM, as displayed in \cref{fig:fim} (d).

The main contributions of this paper are summarized in the following three aspects:

1) We conduct a thorough analysis on FIM approximation adopted by the prevailing Hessian-guided loss for post-training quantization. By exploring the relationship between KL divergence and FIM, we reveal that FIM is linearly proportional to the gradient of KL divergence.

2) We propose a novel rank-one approximation method for FIM and further extend it to low-rank approximation. By combining with the diagonal approximation, we formulate a new loss for quantization reconstruction.

3) We extensively evaluate the performance of our proposed method on image classification and object detection across various ViT-based network architectures. The experimental results demonstrate that our method, with a simple uniform quantizer, significantly outperforms the state-of-the-art approaches that adopt specialized quantizers, especially in the case of low-bit quantization.

\section{Related Work}
\label{sec:related}

Current quantization approaches are primarily divided into two categories: Quantization Aware Training (QAT) and Post Training Quantization (PTQ). QAT \citep{packqvit,qvit,ivit,learnstep} integrates quantization directly into the training process, which requires a tremendous amount of training data, computational resources, and time. In contrast, PTQ  requires only a smaller dataset and lower costs for the fine-tuning process. Currently, numerous PTQ methods have demonstrated promising performance when applied to CNNs. AdaRound \citep{adaround} pioneers the adaptive quantitative rounding strategy, bringing revolutionary progress in low-bit quantization. BRECQ \citep{BRECQ} introduces the block reconstruction framework with a Hessian quantization loss. Successively, QDrop \citep{qdrop} develops the randomly dropping quantization.

Due to the unique structure and properties of ViTs, the straightforward application of PTQ methods originally designed for CNNs \citep{adaround,BRECQ} does not perform well on ViTs.The problem can be summed up in two aspects: 1) inter-channel variation in layernorm and 2) non-uniform distribution in softmax and GELU. FQ-ViT \citep{FQViT} designed Power-of-Two Factor and \(\log 2\) quantizer to solve both problems first. To solve the inter-channel variation, many studies are dedicated to identifying suitable group quantization techniques, such as PEG \citep{PEG} and IGQ-ViT \citep{igqvit}. RepQ-ViT \citep{RepQViT} introduced a scale reparameterization technique, enabling the application of layer-wise quantizers. Meanwhile, to further match distributions of activation values after softmax and GELU, PTQ4ViT \citep{PTQ4ViT} proposed twin uniform quantization. Building on PTQ4ViT, APQ-ViT \citep{APQViT} worked on preserving the Matthew effect of post-Softmax activations and presented a unified Bottom-elimination Blockwise Calibration. Furthermore, AdaLog \citep{adalog} developed a non-uniform quantizer to adaptively select the logarithm base, further improving the accuracy. From an alternative perspective, we observe that many works \citep{regptq,BRECQ,PTQ4ViT,APQViT} have employed the hessian matrix as an importance metric. In this paper, we focus on a more accurate approximation of the Hessian matrix and more effective Hessian-guided quantization loss.
\section{The Proposed Method}
\label{sec:method}

Fig.~\ref{fig:overview} illustrates the framework of the proposed FIMA-Q method. Basically, FIMA-Q is built upon the QDrop \cite{qdrop} framework. Specifically, we reconstruct each block in the ViTs individually. Apart from replacing the MSE loss in QDrop with the DPLR-FIM based quantization loss proposed in \cref{sec:appfim}, we maintain the other steps as utilized in QDrop. The overall pipeline of the FIMA-Q for a certain Transformer block is summarized in Algorithm \ref{alg:overview}.

\begin{algorithm}[t]
\caption{Pipeline of Block-wise FIMA-Q.}
\label{alg:overview}
\begin{algorithmic}[1]
    \Statex \textbf{Input:} Full-precision model \(\mathcal{M}\), full-precision block \(\mathcal{B}_{\mathrm{full}}\), quantized block \(\mathcal{B}_{\mathrm{quant}}\), calibration data \(\mathcal{D}_{\mathrm{calib}}\), rank \(r\), maximal iteration number \(\mathrm{max\_iter}\), and iteration interval \(x\).
    \Statex \textbf{Output:} The optimized quantized block \(\mathcal{B}_{\mathrm{quant}}.\)
    \Statex \noindent \textcolor[RGB]{82,147,141}{ \# Calculate Rank-One FIM: }
    \State Generate the raw input \(\bm{X}_{\mathrm{raw}}\) and output \(\mathbf{z}^{(b)}\) of \(\mathcal{B}_{\mathrm{full}}\), the quantized input \(\bm{X}_{\mathrm{quant}}\) of \(\mathcal{B}_{\mathrm{quant}}\) based on \(\mathcal{D}_{\mathrm{calib}}\).
    \State Generate the model output \(O_{\mathrm{raw}}\) and \(O_{\mathrm{quant}}\) by performing forward propagations through the network starting from \(\mathcal{B}_{\mathrm{raw}}\) and \(\mathcal{B}_{\mathrm{quant}}\) based on \(\bm{X}_{\mathrm{raw}}\), respectively.
    \State Calculate the perturbation \(\Delta\mathbf{z}^{(b)}\) and the loss \(\mathcal{L}_{\mathrm{KL}}(\Delta\mathbf{z}^{(b)})\) based on \cref{eq:kl_fim}, and perform backward propagation to compute the gradient \(\nabla \mathcal{L}_{\mathrm{KL}}(\Delta\mathbf{z}^{(b)})\).
    \Statex \noindent \textcolor[RGB]{82,147,141}{ \# Perform Quantization Reconstruction: }
    \State Initialize the current rank \(k=1\), and the next data iteration \(y=x\).
    \For{$i = 1$, $\cdots$, \(\mathrm{max\_iter}\)}
        \If{\(k<r\) \textbf{and} \(i=y\)}
            \State \parbox[t]{\dimexpr\linewidth-\algorithmicindent-\algorithmicindent}{Calculate \(\Delta\mathbf{z}^{(b)'}\) and \(\nabla \mathcal{L}_{\mathrm{KL}}(\Delta\mathbf{z}^{(b)'})\).}
            \State Calculate \(B\) in \cref{eq:abc}.
            \State Set \(k := k + 1\) and \(y := y + x\).
        \EndIf
        \State Calculate \(\mathcal{L}_{\mathrm{DPLR}}\) with \cref{eq:dplr} and perform BP.
        \State \parbox[t]{\dimexpr\linewidth-\algorithmicindent}{Update all the AdaRound weights by \cite{adaround} and activation scaling factors in \(\mathcal{B}_{\mathrm{quant}}\).}
    \EndFor
\end{algorithmic}
\end{algorithm}

\subsection{Preliminaries}
BRECQ introduces a Hessian-guided quantization loss that models quantization as output perturbation during block-wise quantization. Through Taylor expansion, it specifically minimizes the second-order term as below:

\begin{equation} \label{eq:taylor}
    \min \mathbb{E}\left[-\Delta\mathbf{z}^{(b)\top}\mathbf{H}^{(\mathbf{z}^{(b)})}\Delta\mathbf{z}^{(b)}\right] ,
\end{equation}
where \(\mathbf{z}^{(b)}\) is the output of the block, \(\Delta\mathbf{z}^{(b)}\) is the difference between the output before and after quantization, \(\mathbf{H}^{(\mathbf{z}^{(b)})}\) is the Hessian matrix of the task loss \(\mathcal{L}_{\mathrm{task}}\) w.r.t. \(\mathbf{z}^{(b)}\).

Due to the large size of the Hessian matrix, BRECQ employs a diagonal approximation. Specifically, BRECQ estimates the diagonal of the negative Hessian using squared gradient values:
\begin{equation}
    -\mathbf{H}^{(\mathbf{z}^{(b)})} \approx \mathrm{Diag}\left((\frac{\partial \mathcal{L}_{\mathrm{task}}}{\partial \mathbf{z}^{(b)}_1})^2, \cdots, (\frac{\partial \mathcal{L}_{\mathrm{task}}}{\partial \mathbf{z}^{(b)}_a})^2\right) .
\end{equation}

\vspace{0.1in}However, within the PTQ framework, the absence of labeled data prevents task loss from being directly computed. Therefore, BRECQ adopts the KL divergence between pre- and post-quantization distributions to approximate the task loss. When calculating the negative Hessian matrix, BRECQ utilizes several approximations: 1) utilizing the Fisher Information matrix (FIM) to approximate the negative Hessian matrix. 2) using the diagonal FIM to approximate the full FIM. 3) adopting squared gradients of the task loss to approximate the diagonal of the FIM. 4) applying the gradient of KL divergence instead of the task loss.

We analyze the validity of these approximations as follows. First, we demonstrate the soundness of approximating the negative Hessian with the Fisher information matrix (FIM) in Sec. \ref{sec:relation}. Second, while the diagonal approximation on FIM is generally acceptable, we argue it introduces unnecessary limitations and develops an improved approach in Sec. \ref{sec:appfim}. Third, we identify the squared gradient approximation utilized in BRECQ is inaccurate and provide elaborate analysis in Sec. \ref{sec:relation}. Finally, we show the KL-divergence gradient substitution is well-justified, and provide the proof in Sec. \ref{sec:proof_ass_4} of the \emph{supplementary material}.

\subsection{Relationship Between FIM and KL-Divergence} \label{sec:relation}

Our method fundamentally relies on replacing the Hessian in \cref{eq:taylor} with FIM for optimization. We therefore first establish the validity of this replacement.

Specifically, on the calibration set, the expected value of reconstruction loss is written as:
\begin{equation} \label{eq:avg_hessian}
    \mathbb{E}\left[\mathcal{L}_{\mathrm{calib}}\right] = - \mathbb{E}\left[\Delta\mathbf{z}^{(b)\top}\mathbf{H}^{(\mathbf{z}^{(b)})}\Delta\mathbf{z}^{(b)}\right].\\
\end{equation}
Since the Hessian matrix is a function of the sample and the full-precision block output \(\mathbf{z}^{(b)}\), we assume that it is independent of the perturbation \(\Delta\mathbf{z}^{(b)}\) caused by quantization. Therefore, we can deduce the following:
\begin{equation}
    \mathbb{E}\left[\mathcal{L}_{\mathrm{calib}}\right] = - \mathbb{E}\left[\Delta\mathbf{z}^{(b),\top}\mathbb{E}\left[\mathbf{H}^{(\mathbf{z}^{(b)})}\right]\Delta\mathbf{z}^{(b)}\right].
\end{equation}

\begin{theory} \label{theo:fim_hessian}
    When the expected gradient of the log-likelihood of model outputs becomes zero, FIM \(\bm{F}^{(\bm{z}^{(b)})}\) equals to the expected negative Hessian, i.e.,
    \begin{align}
        \mathbf{F}^{(\mathbf{z}^{(b)})} = \mathbb{E} \left[ \left(\nabla_{\mathbf{z}^{(b)}} \log p(y;\mathbf{z}^{(b)})\right) \left(\nabla_{\mathbf{z}^{(b)}} \log p(y;\mathbf{z}^{(b)})\right)^\top \right],
    \end{align}
    where \(\log p(y;\mathbf{z}^{(b)})\) is the log-likelihood function.
\end{theory}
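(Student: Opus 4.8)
The plan is to prove the classical information-matrix equality, with the role of the parameter vector played by the block output $\mathbf{z}^{(b)}$ and the expectation taken over labels $y$ sampled from the model's own predictive distribution $p(\cdot\,;\mathbf{z}^{(b)})$ --- the very distribution against which the KL-divergence surrogate of $\mathcal{L}_{\mathrm{task}}$ is measured. Introduce the score $\mathbf{s}(y;\mathbf{z}^{(b)}) := \nabla_{\mathbf{z}^{(b)}}\log p(y;\mathbf{z}^{(b)})$, so that the Hessian appearing in \eqref{eq:taylor} is $\mathbf{H}^{(\mathbf{z}^{(b)})} = \nabla_{\mathbf{z}^{(b)}}^{2}\log p(y;\mathbf{z}^{(b)})$. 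The anchor of the argument is the normalization identity $\int p(y;\mathbf{z}^{(b)})\,\mathrm{d}y = 1$, which holds for every value of $\mathbf{z}^{(b)}$; I would differentiate it twice with respect to $\mathbf{z}^{(b)}$, interchanging $\nabla_{\mathbf{z}^{(b)}}$ with the integral at each step.

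Differentiating once gives $\int \nabla_{\mathbf{z}^{(b)}} p\,\mathrm{d}y = \int p\,\mathbf{s}\,\mathrm{d}y = \mathbb{E}[\mathbf{s}(y;\mathbf{z}^{(b)})] = 0$, which is exactly the stated hypothesis that the expected gradient of the log-likelihood vanishes; I would stress that in this setting it is not an extra assumption but an automatic consequence of drawing $y$ from $p(\cdot\,;\mathbf{z}^{(b)})$ itself. Differentiating the vector identity $\int p\,\mathbf{s}\,\mathrm{d}y = \mathbf{0}$ once more, and using $\nabla_{\mathbf{z}^{(b)}}p = p\,\mathbf{s}$ together with $\nabla_{\mathbf{z}^{(b)}}\mathbf{s} = \mathbf{H}^{(\mathbf{z}^{(b)})}$, yields
\begin{equation*}
    \mathbb{E}\!\left[\mathbf{s}\,\mathbf{s}^{\top}\right] + \mathbb{E}\!\left[\mathbf{H}^{(\mathbf{z}^{(b)})}\right] = \mathbf{0},
\end{equation*}
so that $\mathbf{F}^{(\mathbf{z}^{(b)})} = \mathbb{E}[\mathbf{s}\,\mathbf{s}^{\top}] = -\mathbb{E}[\mathbf{H}^{(\mathbf{z}^{(b)})}]$, i.e., the FIM equals the expected negative Hessian, which is precisely the claim. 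Equivalently, one can avoid differentiating a product by expanding $\nabla_{\mathbf{z}^{(b)}}^{2}\log p = (\nabla_{\mathbf{z}^{(b)}}^{2} p)/p - \mathbf{s}\,\mathbf{s}^{\top}$ and taking expectations, since $\mathbb{E}[(\nabla_{\mathbf{z}^{(b)}}^{2} p)/p] = \int \nabla_{\mathbf{z}^{(b)}}^{2} p\,\mathrm{d}y = \nabla_{\mathbf{z}^{(b)}}^{2}\!\int p\,\mathrm{d}y = \mathbf{0}$.

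The main obstacle is not the algebra but making the two interchanges of differentiation and integration rigorous: this needs a dominated-convergence-type regularity condition on $p$ and its first two $\mathbf{z}^{(b)}$-derivatives, which I would state up front as a mild smoothness assumption (satisfied by the softmax and Gaussian output heads of the ViT classifiers and detectors studied here). A secondary point to pin down is the sign and measure bookkeeping: $\mathbf{H}^{(\mathbf{z}^{(b)})}$ must be read as the Hessian of the log-likelihood, so that $-\mathbf{H}^{(\mathbf{z}^{(b)})}$ is positive semidefinite near the full-precision optimum, consistent with the minimization in \eqref{eq:taylor}; and the outer expectation over the calibration set in \eqref{eq:avg_hessian} is carried along unchanged, so that the substitution $-\mathbb{E}[\mathbf{H}^{(\mathbf{z}^{(b)})}] \mapsto \mathbf{F}^{(\mathbf{z}^{(b)})}$ turns the reconstruction objective into $\mathbb{E}[\Delta\mathbf{z}^{(b)\top}\mathbf{F}^{(\mathbf{z}^{(b)})}\Delta\mathbf{z}^{(b)}]$, which the later sections then approximate.
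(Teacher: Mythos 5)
Your proposal is correct and follows essentially the same route as the paper's proof: both rest on the regularity condition to interchange differentiation and integration, derive $\mathbb{E}[\mathbf{s}]=0$ from the normalization identity, and obtain $\mathbb{E}[\mathbf{s}\mathbf{s}^{\top}]=-\mathbb{E}[\mathbf{H}]$ via the expansion $\nabla^{2}\log p=(\nabla^{2}p)/p-\mathbf{s}\mathbf{s}^{\top}$ (your "equivalent" second route is exactly the paper's argument, and your primary route of differentiating $\int p\,\mathbf{s}\,\mathrm{d}y=\mathbf{0}$ is a trivial rearrangement of it). Your remark that the vanishing score expectation is automatic when $y$ is sampled from the model's own predictive distribution, rather than an extra hypothesis, is a fair sharpening of the paper's phrasing ("when the model's output distribution matches the true distribution") but does not change the substance.
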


We provide the detailed proof in Sec.~\ref{sec:proof_3_1} of the \emph{supplementary material}. Based on Theorem \ref{theo:fim_hessian} and \cref{eq:avg_hessian}, the target to optimize can be written as:
\begin{equation} \label{eq:new_tar}
    \min \mathbb{E}\left[ \Delta\mathbf{z}^{(b),\top}\mathbf{F}^{(\mathbf{z}^{(b)})}\Delta\mathbf{z}^{(b)} \right].
\end{equation}

Note that FIM is an inherent attribute of the network block. In BRECQ, directly approximating the diagonal elements of  FIM as the squared gradient is inaccurate. Actually, by definition, the following equation holds
\begin{align} \label{eq:diag_fim}
\begin{split}
    \mathrm{Diag}\left( \mathbf{F}^{(\mathbf{z}^{(b)})}\right) &= \mathbb{E} \left[ \left(\nabla_{\mathbf{z}^{(b)}} \log p(y;\mathbf{z}^{(b)})\right)^2 \right] \\
    &=  \mathbb{E} \left[ \left(\nabla_{\mathbf{z}^{(b)}} \log p(y;\mathbf{z}^{(b)})\right) \right]^2 \\
    &\ \ \ \ + \mathrm{Var}\left(\nabla_{\mathbf{z}^{(b)}} \log p(y;\mathbf{z}^{(b)})\right).
\end{split}
\end{align}
In BRECQ, FIM is regarded as a sample-dependent matrix, and the second variance term in \cref{eq:diag_fim} is omitted, resulting in larger approximation errors.

Thereafter, we derive the relationship between the KL-divergence and FIM to propose an improved approximation of FIM. The KL-divergence based loss is defined as:
\begin{align} \label{eq:kl_div}
\begin{split}
    \mathcal{L}_{\mathrm{KL}}(\Delta\mathbf{z}^{(b)}) 
    &= D_{\text{KL}}(p(y;\mathbf{z}^{(b)}) \| p(y;\mathbf{z}^{(b)}+\Delta\mathbf{z}^{(b)})) \\
    &= \sum_{x} p(y;\mathbf{z}^{(b)}) \log \frac{p(y;\mathbf{z}^{(b)})}{p(y;\mathbf{z}^{(b)}+\Delta\mathbf{z}^{(b)})}. 
\end{split}
\end{align}
Accordingly, $\mathcal{L}_{\mathrm{KL}}(\Delta\mathbf{z}^{(b)}) $ satisfies the following property. 

\begin{theory}
    When adopting the KL-divergence as the task loss, the following relationship holds:
    \begin{equation} \label{eq:kl_fim}
        \mathcal{L}_{\mathrm{KL}}(\Delta\mathbf{z}^{(b)}) = \frac{1}{2} \Delta\mathbf{z}^{(b),\top} \mathbf{F}^{(\mathbf{z}^{(b)})} \Delta\mathbf{z}^{(b)}.
    \end{equation}
\end{theory}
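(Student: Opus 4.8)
The plan is to read off \eqref{eq:kl_fim} as the second-order Taylor expansion of the KL divergence in \eqref{eq:kl_div} about the zero perturbation $\Delta\mathbf{z}^{(b)}=\mathbf{0}$, using the vanishing of the linear term together with the Fisher identity already established in Theorem~\ref{theo:fim_hessian}. First I would expand the shifted log-likelihood: abbreviating the score $\mathbf{s}:=\nabla_{\mathbf{z}^{(b)}}\log p(y;\mathbf{z}^{(b)})$ and the log-Hessian $\mathbf{M}:=\nabla^2_{\mathbf{z}^{(b)}}\log p(y;\mathbf{z}^{(b)})$, Taylor's theorem gives $\log p(y;\mathbf{z}^{(b)}+\Delta\mathbf{z}^{(b)}) = \log p(y;\mathbf{z}^{(b)}) + \mathbf{s}^\top\Delta\mathbf{z}^{(b)} + \tfrac{1}{2}\Delta\mathbf{z}^{(b),\top}\mathbf{M}\,\Delta\mathbf{z}^{(b)} + o(\|\Delta\mathbf{z}^{(b)}\|^2)$ for each fixed $y$.

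Next I would substitute this into $\mathcal{L}_{\mathrm{KL}}(\Delta\mathbf{z}^{(b)}) = \sum_y p(y;\mathbf{z}^{(b)})\bigl[\log p(y;\mathbf{z}^{(b)}) - \log p(y;\mathbf{z}^{(b)}+\Delta\mathbf{z}^{(b)})\bigr]$. The zeroth-order terms cancel; the linear term collapses to $-\bigl(\sum_y p(y;\mathbf{z}^{(b)})\,\mathbf{s}\bigr)^\top\Delta\mathbf{z}^{(b)}$, whose coefficient is precisely the expected score $\mathbb{E}[\nabla_{\mathbf{z}^{(b)}}\log p(y;\mathbf{z}^{(b)})]$ taken under $p(\cdot;\mathbf{z}^{(b)})$. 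This vanishes under the same regularity condition used in Theorem~\ref{theo:fim_hessian}, since $\sum_y p\,\nabla\log p = \sum_y \nabla p = \nabla\sum_y p = \nabla 1 = \mathbf{0}$. Hence only the quadratic term survives, leaving $\mathcal{L}_{\mathrm{KL}}(\Delta\mathbf{z}^{(b)}) = -\tfrac{1}{2}\Delta\mathbf{z}^{(b),\top}\,\mathbb{E}[\mathbf{M}]\,\Delta\mathbf{z}^{(b)} + o(\|\Delta\mathbf{z}^{(b)}\|^2)$. I would then close the argument by invoking $-\mathbb{E}[\nabla^2_{\mathbf{z}^{(b)}}\log p] = \mathbb{E}[(\nabla_{\mathbf{z}^{(b)}}\log p)(\nabla_{\mathbf{z}^{(b)}}\log p)^\top] = \mathbf{F}^{(\mathbf{z}^{(b)})}$, i.e.\ exactly the equality of Theorem~\ref{theo:fim_hessian}, to obtain \eqref{eq:kl_fim}.

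The main point demanding care — rather than a deep obstacle — is the treatment of the remainder: the claimed identity is exact only through second order, so I would state explicitly that $\Delta\mathbf{z}^{(b)}$ is regarded as a small quantization-induced perturbation (consistent with the Taylor-expansion premise already adopted for \eqref{eq:taylor}), so that the $o(\|\Delta\mathbf{z}^{(b)}\|^2)$ term is dropped. Alongside this I would note the two places where the interchange of $\sum_y$ with $\nabla_{\mathbf{z}^{(b)}}$ is needed — the vanishing first-order term and the Fisher identity itself — and remark that both hold under the standard dominated-convergence-type regularity assumed throughout. Finally, I would make the notational correspondence precise: the quadratic form in \eqref{eq:kl_fim} is for a fixed calibration sample, and the expectation defining $\mathbf{F}^{(\mathbf{z}^{(b)})}$ is over the model's output distribution $p(\cdot;\mathbf{z}^{(b)})$, which is exactly the weighting $\sum_y p(y;\mathbf{z}^{(b)})(\cdot)$ appearing implicitly in the KL divergence.
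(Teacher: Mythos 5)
Your proposal is correct and follows essentially the same route as the paper's proof: a second-order Taylor expansion of $\log p(\cdot;\mathbf{z}^{(b)}+\Delta\mathbf{z}^{(b)})$ inside the KL divergence, the linear term killed by the vanishing expected score under the regularity condition, and the quadratic coefficient identified with $-\tfrac{1}{2}\mathbf{F}^{(\mathbf{z}^{(b)})}$. The only cosmetic difference is that you invoke Theorem~\ref{theo:fim_hessian} for $-\mathbb{E}[\nabla^2_{\mathbf{z}^{(b)}}\log p]=\mathbf{F}^{(\mathbf{z}^{(b)})}$ while the paper re-derives that identity inline, and your explicit remark that the equality holds only up to the dropped $o(\|\Delta\mathbf{z}^{(b)}\|^2)$ remainder is a fair (and slightly more careful) reading of the same argument.
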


We provide the detailed proof in Sec.~\ref{sec:proof_3_2} of the \emph{supplementary material}. Differentiating both sides of \cref{eq:kl_fim} w.r.t. \(\Delta\mathbf{z}^{(b)}\) yields the following:
\begin{equation} \label{eq:begin}
    \nabla \mathcal{L}_{\mathrm{KL}}(\Delta\mathbf{z}^{(b)}) = \frac{\partial \mathcal{L}_{\mathrm{KL}}(\Delta\mathbf{z}^{(b)})}{\partial \Delta\mathbf{z}^{(b)}} = \mathbf{F}^{(\mathbf{z}^{(b)})} \Delta\mathbf{z}^{(b)}.
\end{equation}

By adopting the diagonal form approximation on FIM  employed in prior works, we can derive that
\begin{equation} \label{eq:diag_approx}
    \mathbf{F}_{\mathrm{Diag}}^{(\mathbf{z}^{(b)})} = \mathrm{Diag}\left( \frac{\nabla \mathcal{L}_{\mathrm{KL}}(\Delta\mathbf{z}^{(b)})_{1}}{\Delta\mathbf{z}_{1}^{(b)}}, \cdots, \frac{\nabla \mathcal{L}_{\mathrm{KL}}(\Delta\mathbf{z}^{(b)})_{a}}{\Delta\mathbf{z}_{a}^{(b)}}\right),
\end{equation}
which indicates that the diagonal elements of FIM is linearly correlated with the gradient of KL-divergence, rather than being related to the squared gradient as claimed in BRECQ.

\subsection{Improved Approximations of FIM} \label{sec:appfim}
In this section, we establish improved estimations on FIM. Concretely, for the $i-$th sample, we denote that 
\begin{equation}
    \nabla \mathcal{L}_{\mathrm{KL}}(\Delta\mathbf{z}^{(b, i)}) = \mathbf{F}^{(\mathbf{z}^{(b)})} \Delta\mathbf{z}^{(b, i)}.
\end{equation}
Therefore, $\nabla \mathcal{L}_{\mathrm{KL}}(\Delta\mathbf{z}^{(b)})$ and $\Delta\mathbf{z}^{(b)}$ are rewritten as
\begin{align} \label{eq:avg}
\begin{split}
    \nabla \mathcal{L}_{\mathrm{KL}}(\Delta\mathbf{z}^{(b)}) &= \sum_{i=1}^{n} \nabla \mathcal{L}_{\mathrm{KL}}(\Delta\mathbf{z}^{(b, i)}), \\
    \Delta\mathbf{z}^{(b)} &= \sum_{i=1}^{n} \Delta\mathbf{z}^{(b, i)}.
\end{split}
\end{align}
By assuming the independence between samples, \cref{eq:begin} holds when using \cref{eq:avg}. 

Our optimization problem now reduces to optimizing  \cref{eq:new_tar} under the constraints that FIM satisfies \cref{eq:begin} while being symmetry and positive definite. As the solution is not unique, we present four distinct quantization losses based on different approximation forms of FIM.

\textbf{1. Diagonal approximation.} By assuming FIM is diagonal and based on \cref{eq:diag_approx}, the loss is formulated as:
\begin{equation}
    \mathcal{L}_{\mathrm{diag}} =  \left(\frac{\nabla \mathcal{L}_{\mathrm{KL}}(\Delta\mathbf{z}^{(b)})}{\Delta\mathbf{z}^{(b)}}\right)^\top \left(\Delta\mathbf{z}^{(b, i)}\right)^2.
\end{equation}

\textbf{2. Rank-one approximation.} By assuming FIM \(\mathbf{F}_{\mathrm{rank}-1}^{(\mathbf{z}^{(b)})} = \bm{u}\bm{u}^\top\) where \(\bm{u} \in \mathbb{R}^{a\times 1}\), we have
\begin{equation}
    \bm{u} = \frac{\nabla \mathcal{L}_{\mathrm{KL}}(\Delta\mathbf{z}^{(b)})}{\sqrt{\nabla \mathcal{L}_{\mathrm{KL}}(\Delta\mathbf{z}^{(b)})^\top \Delta\mathbf{z}^{(b, i)}}}.
\end{equation}

We refer to Sec.~\ref{sec:deriv_u} of the \emph{supplementary material} for details. Given the $i-$th sample, the loss is written as
\begin{equation}
    \mathcal{L}_{\mathrm{rank}-1} = \frac{\left(\Delta\mathbf{z}^{(b, i)\top} \nabla \mathcal{L}_{\mathrm{KL}}(\Delta\mathbf{z}^{(b)})\right)^2}{\nabla \mathcal{L}_{\mathrm{KL}}(\Delta\mathbf{z}^{(b)})^\top \Delta\mathbf{z}^{(b, i)}},
\end{equation}
which maintains an \(O(a)\) computational complexity. Unlike the diagonal approximation that treats block outputs as independent contributors to the task loss, our method captures collective dependencies among elements of the output, without introducing extra computational overhead.

\textbf{3. Low-rank approximation.} Before extending the rank-one approximation to rank-\(k\), we first establish the following corollary.
\begin{corollary} \label{theo:no_high_rank}
    When \(k>1\), it is typically difficult to find a low-rank matrix \(\bm{u}\in \mathbb{R}^{a\times k}\) such that \(\mathbf{F}^{(\mathbf{z}^{(b)})}=\bm{u}\bm{u}^\top\) satisfying \cref{eq:begin}.
\end{corollary}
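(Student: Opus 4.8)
The plan is to read the low-rank requirement as asking for one symmetric positive semidefinite matrix $\mathbf{F}^{(\mathbf{z}^{(b)})}=\bm{u}\bm{u}^\top$ (hence of rank at most $k$) that reproduces the gradient identity \cref{eq:begin} not at a single perturbation but at the $k$ perturbation states $\Delta\mathbf{z}^{(b,1)},\dots,\Delta\mathbf{z}^{(b,k)}$ that are accumulated across reconstruction iterations in Algorithm~\ref{alg:overview}. Writing $\bm{d}^{(j)}:=\Delta\mathbf{z}^{(b,j)}$ and $\bm{g}^{(j)}:=\nabla\mathcal{L}_{\mathrm{KL}}(\Delta\mathbf{z}^{(b,j)})$, the requirement becomes $\mathbf{F}^{(\mathbf{z}^{(b)})}\bm{d}^{(j)}=\bm{g}^{(j)}$ for all $j=1,\dots,k$. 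For $k=1$ this is exactly the rank-one case already handled, where the lone equation pins $\bm{u}$ down up to sign (whenever $\nabla\mathcal{L}_{\mathrm{KL}}(\Delta\mathbf{z}^{(b)})^\top\Delta\mathbf{z}^{(b)}>0$) and $\bm{u}\bm{u}^\top$ is therefore unique; the content of the corollary is that for $k>1$ the analogous system is generically inconsistent.

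First I would stack the $k$ equations into $\mathbf{F}^{(\mathbf{z}^{(b)})}D=G$ with $D:=[\bm{d}^{(1)}\ \cdots\ \bm{d}^{(k)}]$, $G:=[\bm{g}^{(1)}\ \cdots\ \bm{g}^{(k)}]\in\mathbb{R}^{a\times k}$. Next I would extract the obstruction from symmetry of $\mathbf{F}^{(\mathbf{z}^{(b)})}$: left-multiplying by $D^\top$ gives $D^\top G=D^\top\mathbf{F}^{(\mathbf{z}^{(b)})}D$, whose right-hand side is symmetric, so a necessary condition for any symmetric solution is that $D^\top G$ be symmetric, i.e. $(\bm{d}^{(i)})^\top\bm{g}^{(j)}=(\bm{d}^{(j)})^\top\bm{g}^{(i)}$ for all $i\neq j$ --- a family of $\binom{k}{2}$ scalar constraints that is empty exactly when $k=1$. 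Then I would explain why these cross terms generically fail to match: if a single fixed curvature governed every state we would have $\bm{g}^{(j)}=\mathbf{F}^{(\mathbf{z}^{(b)})}\bm{d}^{(j)}$ and $D^\top G=D^\top\mathbf{F}^{(\mathbf{z}^{(b)})}D$ would be symmetric automatically, but the FIM is in fact sample dependent and the aggregated gradients $\nabla\mathcal{L}_{\mathrm{KL}}(\Delta\mathbf{z}^{(b,j)})$ in \cref{eq:avg} mix contributions from many per-sample curvatures evaluated at different points, so $D^\top G$ is in general a non-symmetric matrix and no $\bm{u}\bm{u}^\top$ can satisfy \cref{eq:begin} at all $k$ states at once. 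I would close by noting that even on the measure-zero locus where $D^\top G$ happens to be symmetric one must still impose positive semidefiniteness and rank $\le k$, so in every case one is pushed to an approximate least-squares / minimum-norm surrogate --- the quantity $B$ constructed in \cref{eq:abc} --- which both establishes the corollary and motivates the approximate low-rank scheme that follows.

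The main obstacle is that the claim is intentionally soft ("typically difficult'' rather than "impossible''): a single instance of \cref{eq:begin} is actually solved by many rank-$k$ matrices $\bm{u}\bm{u}^\top$ --- for example the rank-one solution padded with columns orthogonal to $\Delta\mathbf{z}^{(b)}$ --- so the substance lies entirely in the multi-state, over-determined reading and in making "generic'' precise for the symmetry condition $D^\top G=(D^\top G)^\top$. I would handle this by presenting the symmetry relation as a clean necessary condition and arguing it is violated on an open dense set of inputs, rather than attempting a worst-case nonexistence theorem. A secondary, fiddlier point is keeping the aggregate-versus-per-sample indexing of the surrounding derivation consistent when setting up the $k$ equations, so that the rank-one result is recovered cleanly as the $k=1$ specialization.
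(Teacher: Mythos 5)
Your proposal is correct and follows essentially the same route as the paper: the paper's own proof likewise stacks the $k$ perturbations and gradients into $a\times k$ matrices, observes that $\Delta\mathbf{z}^{(b)\top}\mathbf{F}^{(\mathbf{z}^{(b)})}\Delta\mathbf{z}^{(b)}$ forces $\Delta\mathbf{z}^{(b)\top}\nabla\mathcal{L}_{\mathrm{KL}}(\Delta\mathbf{z}^{(b)})$ to be symmetric, and notes this is vacuous for $k=1$ (scalars) but an unguaranteed constraint for $k>1$. Your additions (counting the $\binom{k}{2}$ constraints, the genericity discussion, and the link to the Moore--Penrose surrogate $B$) are elaborations of, not departures from, the paper's argument.
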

We refer to Sec.~\ref{sec:proof_3_3} of the \emph{supplementary material} for a detailed description. Accordingly, we consider relaxing the constraint of symmetry to compute the low-rank FIM.

Specifically, by taking the Moore-Penrose inverse of \(\Delta\mathbf{z}^{(b)}\), the rank-\(k\) FIM is computer as
\begin{align}
\begin{split}
    \Delta\mathbf{z}^{(b),+} &= \left( \Delta\mathbf{z}^{(b)\top}\Delta\mathbf{z}^{(b)}\right)^{-1} \Delta\mathbf{z}^{(b)\top}, \\
    \mathbf{F}_{\mathrm{rank}-k}^{(\mathbf{z}^{(b)})} &= \nabla \mathcal{L}_{\mathrm{KL}}(\Delta\mathbf{z}^{(b)}) \Delta\mathbf{z}^{(b)+}.
\end{split}
\end{align}

Consequently, the optimization objective for the $i-$th sample is formed as
\begin{equation}
    \mathcal{L}_{\mathrm{rank}-k} = \Delta\mathbf{z}^{(b i)\top}\mathbf{F}^{(\mathbf{z}^{(b)})}\Delta\mathbf{z}^{(b, i)} = A\cdot B\cdot C,
\end{equation}
where 
\begin{align}\label{eq:abc}
\begin{split}
     A &= \Delta\mathbf{z}^{(b, i)\top} \nabla \mathcal{L}_{\mathrm{KL}}(\Delta\mathbf{z}^{(b)}), \\
    B &= \left( \Delta\mathbf{z}^{(b)\top}\Delta\mathbf{z}^{(b)}\right)^{-1}, \\
    C &= \Delta\mathbf{z}^{(b)\top} \Delta\mathbf{z}^{(b, i)}.
\end{split}
\end{align}

The computational complexity of computing $A$ and $C$ is \(O(ak)\), and $B$ can be preprocessed. Therefore, the overall complexity of computing \(\mathcal{L}_{\mathrm{recon}}\) is \(O(ak)\), which is acceptable for small rank \(k\).

It is worth noting that the low-rank approximation requires that the matrix \(\Delta\mathbf{z}^{(b)\top}\Delta\mathbf{z}^{(b)}\) is invertible, implying that the rank of \(\Delta\mathbf{z}^{(b)}\) should be \(k\). Therefore, we need to obtain \(k\) linearly independent perturbations and their corresponding gradients. We employ a progressive strategy to gradually increase \(k\) in practice. Specifically, we initialize with a rank of \(1\) and increase 1 rank by \(x\) iterations. For every \(x\) iteration, we recompute the averaged quantization perturbations and averaged gradients based on \cref{eq:avg}, and concatenate them with the previously computed perturbations \(\Delta\mathbf{z}^{(b)}\) and gradients \(\nabla \mathcal{L}_{\mathrm{KL}}(\Delta\mathbf{z}^{(b)})\).

\textbf{4. DPLR approximation.} As shown in Fig.~\ref{fig:fim}, the low-rank approximation of the FIM may degrade the impact of individual outputs on task loss. Thus, we combine the diagonal and low-rank approximate and obtain the following diagonal plus low-rank (DPLR) form:
\begin{equation}\label{eq:dplr-fim}
    \mathbf{F}_{\mathrm{DPLR}}^{(\mathbf{z}^{(b)})} = \alpha\cdot \mathbf{F}_{\mathrm{rank}-k}^{(\mathbf{z}^{(b)})} + (1-\alpha)\cdot \mathbf{F}_{\mathrm{diag}}^{(\mathbf{z}^{(b)})}.
\end{equation}

Accordingly, the optimization objective is finally formulated as below:
\begin{equation} \label{eq:dplr}
    \mathcal{L}_{\mathrm{DPLR}} = \alpha\cdot \mathcal{L}_{\mathrm{rank}-k} + (1-\alpha)\cdot  \mathcal{L}_{\mathrm{diag}}.
\end{equation}

In this work, we adopt Eq.~\eqref{eq:dplr} based on DPLR-FIM approximation in Eq.~\eqref{eq:dplr-fim} as the ultimate quantization loss.
\section{Experimental Results and Analysis}

\begin{table*}[ht]
\centering
\caption{
Comparison of the top-1 accuracy across various ViT-based models on ImageNet. ``*” denotes the results are based on our re-implementation as QDrop is originally designed for CNNs, ``OP” indicates whether the PTQ method is an optimization-based one. ``SQ” signifies that the method requires a specific quantizer rather than a standard uniform quantizer. The best results are highlighted in \textbf{bold}.
}
\begin{tabular}{ccccccccccc}
    \toprule
    \textbf{Method} & \textbf{OP} & \textbf{SQ} & \textbf{W/A} & \textbf{ViT-S} & \textbf{ViT-B} & \textbf{DeiT-T} & \textbf{DeiT-S} & \textbf{DeiT-B} & \textbf{Swin-S} & \textbf{Swin-B} \\
    \midrule
    Full-Prec & - & - & 32/32 & 81.39 & 84.54 & 72.71 & 79.85 & 81.80 & 83.23 & 85.27 \\
    \midrule
    PTQ4ViT \cite{PTQ4ViT} & $\times$ & $\checkmark$ & 3/3 & 0.10 & 0.10 & 3.50 & 0.10 & 31.06 & 28.69 & 20.13 \\
    RepQ-ViT \cite{RepQViT} & $\times$ & $\checkmark$ & 3/3 & 0.10 & 0.10 & 0.10 & 0.10 & 0.10 & 0.10 & 0.10 \\
    AdaLog \cite{adalog} & $\times$ & $\checkmark$ & 3/3 & 13.88 & 37.91 & 31.56 & 24.47 & 57.47 & 64.41 & 69.75 \\
    I\&S-ViT \cite{isvit} & $\checkmark$ & $\checkmark$ & 3/3 & 45.16 & 63.77 & 41.52 & 55.78 & 73.30 & 74.20 & 69.30 \\
    DopQ-ViT \cite{dopq} & $\checkmark$ & $\checkmark$ & 3/3 & 54.72 & 65.76 & 44.71 & 59.26 & 74.91 & 74.77 & 69.63 \\
    QDrop* \cite{qdrop} & $\checkmark$ & $\times$ & 3/3 & 41.05 & 74.75 & 46.88 & 50.95 &  72.97&  74.67&  76.57\\
    \rowcolor{gray!20} \textbf{FIMA-Q (Ours)}& $\checkmark$ & $\times$ & 3/3 & \textbf{64.09}&  \textbf{77.63}& \textbf{55.55}& \textbf{69.13}&  \textbf{76.54}&  \textbf{77.26}&  \textbf{78.82}\\
    \midrule
    PTQ4ViT \cite{PTQ4ViT} & $\times$ & $\checkmark$ & 4/4 & 42.57 & 30.69 & 36.96 & 34.08 & 64.39 & 76.09 & 74.02  \\
    APQ-ViT \cite{APQViT} & $\times$ & $\checkmark$ & 4/4 & 47.95 & 41.41 & 47.94 & 43.55 & 67.48 & 77.15 & 76.48 \\
    RepQ-ViT \cite{RepQViT}& $\times$ & $\checkmark$ & 4/4 & 65.05 & 68.48 & 57.43 & 69.03 & 75.61 & 79.45 & 78.32 \\
    ERQ \cite{erq} & $\times$ & $\checkmark$ & 4/4 &68.91 & 76.63 & 60.29 & 72.56 & 78.23 & 80.74 & 82.44 \\
    IGQ-ViT \cite{igqvit} & $\times$ & $\checkmark$ & 4/4 & 73.61 & 79.32 & 62.45 & 74.66 & 79.23 & 80.98 & 83.14 \\
    AdaLog \cite{adalog} & $\times$ & $\checkmark$ & 4/4 & 72.75 & 79.68 & 63.52 & 72.06 & 78.03 & 80.77 & 82.47 \\
    I\&S-ViT  \cite{isvit} & $\checkmark$ & $\checkmark$ & 4/4 & 74.87 & 80.07 & 65.21 & 75.81 & 79.97 & 81.17 & 82.60 \\
    DopQ-ViT \cite{dopq} & $\checkmark$ & $\checkmark$ & 4/4 & 75.69 & 80.95 & 65.54 & 75.84 & 80.13 & 81.71 & 83.34 \\
    QDrop* \cite{qdrop} & $\checkmark$ & $\times$ & 4/4 & 71.84 &  82.63& 65.27 &  72.64&  79.96&  81.21&  82.99\\
    OASQ \cite{oasq} & \checkmark & \(\times\) & 4/4 & 72.88 & 76.59 & 66.31 & 76.00 & 78.83 & 81.02 & 82.46 \\
    \rowcolor{gray!20} \textbf{FIMA-Q (Ours)}& $\checkmark$ & $\times$ & 4/4 & \textbf{76.68}&  \textbf{83.04}& \textbf{66.84}&  \textbf{76.87}&  \textbf{80.33}&  \textbf{81.82}&  \textbf{83.60}\\
   \midrule
    PTQ4ViT \cite{PTQ4ViT} & $\times$ & $\checkmark$ & 6/6& 78.63& 81.65& 69.68& 76.28& 80.25& 82.38& 84.01\\
    APQ-ViT \cite{APQViT} & $\times$ & $\checkmark$ & 6/6& 79.10& 82.21& 70.49& 77.76& 80.42& 82.67& 84.18\\
    NoisyQuant \cite{noisyquant} & $\times$ & $\checkmark$ & 6/6& 78.65 & 82.32 & - & 77.43 & 80.70 & 82.86 & 84.68 \\
    RepQ-ViT \cite{RepQViT}& $\times$ & $\checkmark$ & 6/6& 80.43& 83.62& 70.76& 78.90& 81.27& 82.79& 84.57\\
    IGQ-ViT \cite{igqvit} & $\times$ & $\checkmark$ & 6/6& 80.76& 83.77& 71.15& 79.28& 81.71& 82.86& 84.82\\
    AdaLog \cite{adalog} & $\times$ & $\checkmark$ & 6/6& \textbf{80.91}& 84.80& 71.38 & 79.39& 81.55& \textbf{83.19}& \textbf{85.09}\\
    I\&S-ViT  \cite{isvit} & $\checkmark$ & $\checkmark$ & 6/6& 80.43& 83.82& 70.85& 79.15& 81.68& 82.89& 84.94\\
    DopQ-ViT \cite{dopq} & $\checkmark$ & $\checkmark$ & 6/6& 80.52& 84.02& 71.17& 79.30& 81.69& 82.95& 84.97\\
    QDrop* \cite{qdrop} & $\checkmark$ & $\times$ & 6/6 & 79.59 &  84.68& 71.48&  79.15
&  81.69&  83.01&  84.94\\
    OASQ \cite{oasq} & \checkmark & \(\times\) & 6/6 & 80.60 & 83.81 & 71.52 & 79.50 & 81.72 & 82.76 & 84.91 \\
    \rowcolor{gray!20} \textbf{FIMA-Q (Ours)}& $\checkmark$ & $\times$ & 6/6 & 80.64&  \textbf{84.82}& 
\textbf{71.53}&  \textbf{79.52}&  \textbf{81.74}&  \textbf{83.19}&  85.01\\
   \bottomrule
\end{tabular}
\label{tab:ImageNet}
\end{table*}

In this section, we evaluate the performance of our method across distinct vision tasks including image classification, object detection and instance segmentation with various ViT-based architectures, by comparing to the state-of-the-art PTQ approaches. We also perform extensive ablation studies on the proposed components.  

\subsection{Experimental setup}
\noindent{\textbf{Datasets and Models.}~} For the image classification task, we adopt the ImageNet \cite{imagenet} dataset for validation, based on various vision transformer architectures, including ViT \cite{vit}, DeiT \cite{deit}, and Swin \cite{swin}. For object detection and instance segmentation, we evaluate on the COCO \cite{coco} dataset based on the representative Mask R-CNN \cite{maskrcnn} and Cascade R-CNN \cite{cascadercnn} models that utilize Swin Transformer as the backbone.

\textbf{Implementation Details.~} Similar to \cite{PTQ4ViT,RepQViT}, all the pretrained full-precision Vision Transformers for classification are obtained from the timm library\footnote{https://github.com/huggingface/pytorch-image-models}. The pretrained detection and segmentation models are obtained from MMDetection \cite{mmdetection}. By following the reconstruction-based PTQ methods \cite{BRECQ,qdrop,oasq,isvit}, we randomly select 1024 unlabeled images from ImageNet and 256 unlabeled images from COCO as the calibration sets for classification and object detection, respectively. We adopt the vanilla channel-wise uniform quantizers for weight quantization and layer-wise uniform quantizers for activation quantization, including post-Softmax activations. We fix the rank for the proposed DPLR-FIM module as \(k=15\).

As established in Theorem \ref{theo:fim_hessian}, the FIM can substitute for the Hessian matrix when the loss function corresponds to the log-likelihood, specifically the Cross-Entropy loss. In order to adapt this framework to the detection and segmentation tasks, we employ the classification head from the regional proposal network (RPN) as the task output. Specifically, we compute FIM using the KL divergence between the classification outputs from RPN by the original and quantized models. We apply block reconstruction exclusively to the backbone network and Feature Pyramid Network (FPN), while conducting calibration-only quantization for the RPN and Region of Interest (ROI) modules.

\begin{table*}[t]
\setlength\tabcolsep{7pt}
\centering
\caption{Quantization results (\%) on COCO for the object detection and instance segmentation tasks. ``*" and ``$\dagger$" indicate the results are based on our re-implementation or re-production using the officially released source code. The best results are highlighted in \textbf{bold}.}
\begin{tabular}{cccccccccccc}
   \toprule
   \multirow{3} * {\textbf{Method}} & \multirow{3} * {\textbf{Opt}} & \multirow{3} * {\textbf{SQ}} & \multirow{3} * {\textbf{W/A}} & \multicolumn{4}{c}{\textbf{Mask R-CNN}} & \multicolumn{4}{c}{\textbf{Cascade Mask R-CNN}} \\
   \cmidrule(lr){5-8}\cmidrule(lr){9-12}
   ~ & ~ & ~ & ~ & \multicolumn{2}{c}{\textbf{Swin-T}} & \multicolumn{2}{c}{\textbf{Swin-S}} & \multicolumn{2}{c}{\textbf{Swin-T}} & \multicolumn{2}{c}{\textbf{Swin-S}} \\
   ~ & ~ & ~ & ~ & AP\textsuperscript{b} & AP\textsuperscript{m} & AP\textsuperscript{b} & AP\textsuperscript{m} & AP\textsuperscript{b} & AP\textsuperscript{m} & AP\textsuperscript{b} & AP\textsuperscript{m} \\

   \midrule
   Full-Precision & - & - & 32/32  & 46.0 & 41.6 & 48.5 & 43.3 & 50.4 & 43.7 & 51.9 & 45.0 \\
   \midrule
    Baseline & $\times$ & \(\times\) & 4/4 & 34.6 & 34.2 & 40.8 & 38.6 & 45.9 & 40.2 & 47.9 & 41.6 \\
    PTQ4ViT \cite{PTQ4ViT} & \(\times\) & \checkmark & 4/4 & 6.9 & 7.0 & 26.7 & 26.6 & 14.7 &  13.5 & 0.5 & 0.5 \\
   APQ-ViT \cite{APQViT} & $\times$ & \checkmark & 4/4 & 23.7 & 22.6 & \textbf{44.7} & 40.1 & 27.2 & 24.4 & 47.7 & 41.1 \\
   RepQ-ViT \cite{RepQViT} & $\times$ & \checkmark & 4/4 & 36.1 & 36.0 & 44.2$_{42.7}\dagger$ & 40.2 & 47.0 & 41.1 & 49.3 & 43.1 \\
   ERQ \cite{erq} & $\times$ & \checkmark & 4/4 & 36.8 & 36.6 & 43.4 & 40.7 & 47.9 & 42.1 & 50.0 & 43.6 \\
   I\&S-ViT \cite{isvit} & \checkmark & \checkmark & 4/4 & 37.5 & 36.6 & 43.4 & 40.3 & 48.2 & 42.0 & 50.3 & 43.6 \\
   DopQ-ViT \cite{dopq} & \checkmark & \checkmark & 4/4 & 37.5 & 36.5 & 43.5 & 40.4 & 48.2 & 42.1 & 50.3 & \textbf{43.7} \\
   QDrop* \cite{qdrop} & \checkmark & \(\times\) & 4/4 & 36.2 & 35.4 & 41.6 & 39.2 & 47.0 & 41.3 & 49.0 & 42.5 \\
    \rowcolor{lightgray!45}\textbf{FIMA-Q (Ours)} & \checkmark & \(\times\) & 4/4 & \textbf{38.7} & \textbf{37.8} & 44.2 & \textbf{41.1} & \textbf{48.7} & \textbf{42.5} & \textbf{50.4} & \textbf{43.7} \\
   \bottomrule
\end{tabular}
\label{tab:detection}
\end{table*}
\subsection{Comparison to the State-of-the-art Approaches}
\textbf{Quantization Results for Classification on ImageNet.}
We first evaluate the performance of our method on the classification task on ImageNet in terms of top-1 accuracy, compared to the state-of-the-art PTQ approaches. To highlight the advantages of our method, we report results across various representative Transformer architectures, including ViT-S/B, DeiT-T/S and Swin-S/B, under 6, 4 and 3 bits. 

As displayed in \cref{tab:ImageNet}, FIMA-Q consistently promotes accuracy across different settings, with particularly significant gains in case of low-bit quantization. Concretely, for 6-bit quantization, our method achieves either superior or competitive accuracy compared to the second-best approaches. For 4-bit quantization, while most existing methods suffer substantial accuracy degradation, our method maintains robust performance in most cases. As for the challenging 3-bit quantization, the performance of competing methods degrades dramatically, some of which even exhibit extremely poor results, such as PTQ4ViT \citep{PTQ4ViT} and RepQ-ViT \citep{RepQViT}. In comparison, our proposed FIMA-Q achieves minimal accuracy loss, surpassing the second-best approach by 5.31\% on average. It is worth noting that many compared approaches such as PTQ4ViT, RepQ-ViT, AdaLog and DopQ-ViT attempt to boost the performance by designing specific quantizers (SQ), which however are generally difficult to implement on hardware in practice. In contrast, our method only utilizes a standard uniform quantizer, making it hardware-friendly while reaching superior accuracy. 

\noindent \textbf{Quantization Results for Object Detection and Instance Segmentation on COCO.}
As shown in \cref{tab:detection}, our method achieves the best results under W4/A4 in most cases for object detection and instance segmentation. To establish a fair comparison baseline, we adapt RepQ-ViT by replacing its specialized quantizer with a standard uniform quantizer. Notably, the methods that exclusively adopt uniform quantizers including Baseline and QDrop consistently underperform approaches employing specific quantizers. These results imply the inherent limitations of uniform quantization in handling activation distributions. Despite this challenge, our method can significantly mitigate the quantization error by leveraging the proposed FIM-based loss, reaching state-of-the-art performance even when using a standard uniform quantizer.

\subsection{Ablation Study}
As we mainly analyze the FIM-based reconstruction and propose a quantization loss based on DPLR-FIM approximation in this paper, we evaluate its effectiveness by comparing to distinct quantization losses and studying the influence of different ranks on the DPLR-FIM component.

\noindent \textbf{On distinct quantization losses.}~ We choose the MSE loss as the baseline method, and compare with the conventional representative Hessian-guided loss adopted by BRECQ \cite{BRECQ}, denoted by BRECQ-FIM. We also report the results of our method using the diagonal estimation, low-rank estimation as well as their combination, denoted by Diag-FIM, LR-FIM, and DPLR-FIM, respectively.

As shown in \cref{tab:diff_a}, despite incorporating second-order Hessian information, BRECQ-FIM generally performs worse than the conventional MSE loss, due to the inaccurate estimation on FIM. In contrast, based on our finding that FIM is linearly proportional to the KL divergence gradient rather than its square, both the proposed Diag-FIM and LR-FIM unleash the potential of Hessian-guided losses, substantially surpassing MSE and BRECQ-FIM. Their combination dubbed DPLR-FIM further boosts the performance, promoting the accuracy of MSE by 23.04\% and 18.18\% on ViT-S and DeiT-S under W3/A3 respectively, with an average improvement of 8.74\%.

\begin{table*}[ht]
\centering
\caption{Comparison of the top-1 accuracy (\%) by using different quantization losses across distinct Transformer architectures on ImageNet.}
\begin{tabular}{clccccccc}
    \toprule
    \textbf{\#Bits (W/A)} & \textbf{Method} &  \textbf{ViT-S} & \textbf{ViT-B} & \textbf{DeiT-T} & \textbf{DeiT-S} & \textbf{DeiT-B} & \textbf{Swin-S} & \textbf{Swin-B} \\
    \midrule
    \multirow{5}*{3/3} & MSE \cite{qdrop} & 41.05 & 74.75 & 46.88 & 50.95 & 72.97& 74.67& 76.57\\
    ~ & BRECQ-FIM \cite{BRECQ} & 14.65& 11.61& 36.57& 49.20& 58.76& 66.26& 70.15\\
    ~ & Diag-FIM \textbf{(Ours)} & 60.02&76.29& 55.54& 68.68& 76.32& 75.08& 77.87\\
    ~ & LR-FIM \textbf{(Ours)} & \textbf{64.09}& 77.46& 55.25& 68.91& 76.33& 76.03& 77.59\\
    ~ & \textbf{DPLR-FIM (Ours)} & \textbf{64.09}& \textbf{77.63}& \textbf{55.55}& \textbf{69.13}& \textbf{76.54}& \textbf{77.26}& \textbf{78.82}\\
    \midrule
    \multirow{5}*{4/4} & MSE \cite{qdrop} & 71.84 & 82.63& 65.27 & 72.64& 79.96& 81.21& 82.99\\
    ~ & BRECQ-FIM \cite{BRECQ} & 63.70& 76.26& 61.99& 72.52& 76.59& 80.52& 81.80\\
    ~ & Diag-FIM \textbf{(Ours)} & 75.88& 83.02& 66.81& 76.79& 80.19& 81.18& 83.35\\
    ~ & LR-FIM \textbf{(Ours)} & 76.47& \textbf{83.04}& 66.78& 76.66& 80.30& 81.60& 83.15\\
    ~ & \textbf{DPLR-FIM (Ours)} & \textbf{76.65}& \textbf{83.04}& \textbf{66.84}& \textbf{76.87}& \textbf{80.33}& \textbf{81.82}& \textbf{83.60}\\
\bottomrule
\end{tabular}
\label{tab:diff_a}
\end{table*}

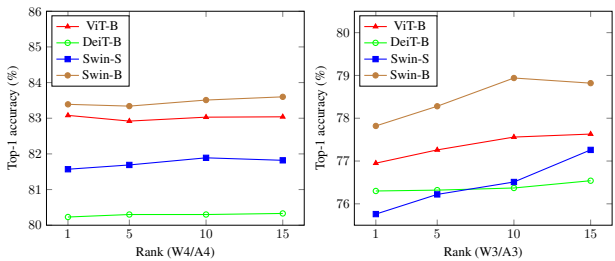
\begin{figure}[t]
\begin{tikzpicture}[scale=0.5]
\centering
\begin{axis}[xlabel=Rank (W4/A4),
        ylabel=Top-1 accuracy (\%),
        xtick={1, 5, 10, 15},
        ymin=80,ymax=86,
        ytick={ 80, 81, 82, 83, 84, 85, 86},
        legend style={at={(0.02,0.98)},anchor=north west}]
	\addplot[color=red,mark=triangle*,thick] coordinates{
		(1,83.08)
        (5,82.92)
        (10,83.03)
        (15,83.04)
	};
        \addlegendentry{ViT-B}
         \addplot[color=green,mark=o,thick] coordinates{
		(1,80.23)
        (5,80.30)
        (10,80.30)
        (15,80.33)
	};
        \addlegendentry{DeiT-B}
        \addplot[color=blue,mark=cube*,thick] coordinates{
		(1,81.57)
        (5,81.69)
        (10,81.89)
        (15,81.82)
	};
        \addlegendentry{Swin-S}
        \addplot[color=brown,mark=*,thick] coordinates{
		(1,83.39)
        (5,83.34)
        (10,83.51)
        (15,83.60)
	};
        \addlegendentry{Swin-B}
\end{axis}
\end{tikzpicture}
\begin{tikzpicture}[scale=0.5]
\centering
\begin{axis}[xlabel=Rank (W3/A3),
        ylabel=Top-1 accuracy (\%),
        xtick={1, 5, 10, 15},
        ymin=75.5,ymax=80.5,
        legend style={at={(0.02,0.98)},anchor=north west}]
        
	\addplot[color=red,mark=triangle*,thick] coordinates{
		(1,76.95)
        (5,77.26)
        (10,77.56)
        (15,77.63)
	};
        \addlegendentry{ViT-B}
        \addplot[color=green,mark=o,thick] coordinates{
		(1,76.30)
        (5,76.32)
        (10,76.37)
        (15,76.54)
	};
        \addlegendentry{DeiT-B}
        \addplot[color=blue,mark=cube*,thick] coordinates{
		(1,75.76)
        (5,76.22)
        (10,76.51)
        (15,77.26)
	};
        \addlegendentry{Swin-S}
        \addplot[color=brown,mark=*,thick] coordinates{
		(1,77.82)
        (5,78.28)
        (10,78.94)
        (15,78.82)
	};
        \addlegendentry{Swin-B}
\end{axis}
\end{tikzpicture}
\caption{Influence of the rank $k$ on the accuracy of DPLR-FIM on ImageNet.}
\label{fig:rank}
\end{figure}

\noindent \textbf{Influence of rank $k$ on DPLR-FIM.~} As the rank $k$ plays an important role in DPLR-FIM, we study its influence on both accuracy and efficiency of DPLR-FIM. As illustrated in Fig.~\ref{fig:rank}, our method performs steadily when using various values of $k$ at 4 bits. Under 3-bit quantization, the accuracy of DPLR-FIM improves with higher ranks, showing a gradual upward trend despite minor fluctuations.  
\begin{table}[t]

\setlength{\tabcolsep}{10pt}
\centering
\caption{The training time cost (GPU Minutes) using  different ranks \(k\) on a single Nvidia RTX 4090 GPU.}
\scalebox{0.90}{
\begin{tabular}{cccccc}
    \toprule
    \multirow{2} *{\textbf{Model}} &  \multirow{2} *{\textbf{Qdrop*}}&\multicolumn{4}{c}{\textbf{Ours}}\\
    \cmidrule(lr){3-6}
    ~ &~& \textbf{k=1} & \textbf{k=5} & \textbf{k=10} & \textbf{k=15} \\
    \midrule
    DeiT-T &  100&100 &  115 &  150 &  180 \\
    DeiT-S &  105&105 &  120 &  160 &  225 \\
    DeiT-B &  145&150 &  160 &  240 &  310 \\
    Swin-S &  160&165 &  235 &  360 &  420 \\
    Swin-B &  170&180 &  250 &  420 &  480 \\
\bottomrule
\end{tabular}
}
\label{tab:time}
\end{table}

In regards of efficiency, as described in Sec.~\ref{sec:appfim}, the computational complexity of the loss \(\mathcal{L}_{\mathrm{recon}}\) is \(O(ak)\), indicating that higher ranks linearly increase the reconstruction time cost. This necessitates balancing between accuracy and computational efficiency. To explore the trade-off, Table~\ref{tab:time} illustrates the impact of the rank \(k\) on the reconstruction time for 3-bit quantization. The results reveal that the training time cost slightly increases as $k$ becomes large for small models but grows substantially for large network architectures such as Swin-B. However, the overall training cost remains affordable, requiring less than 480 GPU minutes on a single Nvidia RX 4090 GPU.

\section{Conclusion}
\label{sec:conclusion}

In this paper, we propose a novel approach dubbed FIMA-Q for post-training quantization of Vision Transformers. Specifically, we propose a more accurate approximation method for FIM. Specifically, we first demonstrate that FIM is proportional to the gradient of KL-divergence, based on which we develop a novel estimation method dubbed DPLR-FIM by integrating both the diagonal and off-diagonal information. Extensive experimental results across distinct vision transformer architectures validate the effectiveness of FIMA-Q for various visual tasks. The results reveal that our method has achieved significant performance improvements in the cases of low-bit quantization, especially with an average improvement of 5.31\%, compared to the current state-of-the-art approaches in 3-bit quantization.

\section*{Acknowledgments}
This work was partly supported by the Beijing Municipal Science and Technology Project (No. Z231100010323002), the National Natural Science Foundation of China (No. 62202034), the Beijing Natural Science Foundation (No. 4242044), the Aeronautical Science Foundation of China
(No. 2023Z071051002), the Research Program of State Key Laboratory of Virtual Reality Technology and Systems, and the Fundamental Research Funds for the Central Universities.

{
    \small
    \bibliographystyle{ieeenat_fullname}
    \bibliography{main}

\begin{thebibliography}{39}
\providecommand{\natexlab}[1]{#1}
\providecommand{\url}[1]{\texttt{#1}}
\expandafter\ifx\csname urlstyle\endcsname\relax
  \providecommand{\doi}[1]{doi: #1}\else
  \providecommand{\doi}{doi: \begingroup \urlstyle{rm}\Url}\fi

\bibitem[Bondarenko et~al.(2021)Bondarenko, Nagel, and Blankevoort]{PEG}
Yelysei Bondarenko, Markus Nagel, and Tijmen Blankevoort.
\newblock Understanding and overcoming the challenges of efficient transformer quantization.
\newblock In \emph{EMNLP}, 2021.

\bibitem[Brown et~al.(2020)Brown, Mann, Ryder, Subbiah, Kaplan, Dhariwal, Neelakantan, Shyam, Sastry, Askell, Agarwal, Herbert{-}Voss, Krueger, Henighan, Child, Ramesh, Ziegler, Wu, Winter, Hesse, Chen, Sigler, Litwin, Gray, Chess, Clark, Berner, McCandlish, Radford, Sutskever, and Amodei]{gpt3}
Tom~B. Brown, Benjamin Mann, Nick Ryder, Melanie Subbiah, Jared Kaplan, Prafulla Dhariwal, Arvind Neelakantan, Pranav Shyam, Girish Sastry, Amanda Askell, Sandhini Agarwal, Ariel Herbert{-}Voss, Gretchen Krueger, Tom Henighan, Rewon Child, Aditya Ramesh, Daniel~M. Ziegler, Jeffrey Wu, Clemens Winter, Christopher Hesse, Mark Chen, Eric Sigler, Mateusz Litwin, Scott Gray, Benjamin Chess, Jack Clark, Christopher Berner, Sam McCandlish, Alec Radford, Ilya Sutskever, and Dario Amodei.
\newblock Language models are few-shot learners.
\newblock In \emph{NeurIPS}, 2020.

\bibitem[Cai and Vasconcelos(2018)]{cascadercnn}
Zhaowei Cai and Nuno Vasconcelos.
\newblock Cascade {R-CNN:} delving into high quality object detection.
\newblock In \emph{CVPR}, pages 6154--6162, 2018.

\bibitem[Chen et~al.(2019)Chen, Wang, Pang, Cao, Xiong, Li, Sun, Feng, Liu, Xu, Zhang, Cheng, Zhu, Cheng, Zhao, Li, Lu, Zhu, Wu, Dai, Wang, Shi, Ouyang, Loy, and Lin]{mmdetection}
Kai Chen, Jiaqi Wang, Jiangmiao Pang, Yuhang Cao, Yu Xiong, Xiaoxiao Li, Shuyang Sun, Wansen Feng, Ziwei Liu, Jiarui Xu, Zheng Zhang, Dazhi Cheng, Chenchen Zhu, Tianheng Cheng, Qijie Zhao, Buyu Li, Xin Lu, Rui Zhu, Yue Wu, Jifeng Dai, Jingdong Wang, Jianping Shi, Wanli Ouyang, Chen~Change Loy, and Dahua Lin.
\newblock {MMDetection}: Open mmlab detection toolbox and benchmark.
\newblock \emph{arXiv preprint arXiv:1906.07155}, 2019.

\bibitem[Devlin et~al.(2019)Devlin, Chang, Lee, and Toutanova]{bert}
Jacob Devlin, Ming{-}Wei Chang, Kenton Lee, and Kristina Toutanova.
\newblock {BERT:} pre-training of deep bidirectional transformers for language understanding.
\newblock In \emph{NAACL-HLT}, pages 4171--4186, 2019.

\bibitem[Ding et~al.(2022)Ding, Qin, Yan, Chai, Liu, Wei, and Liu]{APQViT}
Yifu Ding, Haotong Qin, Qinghua Yan, Zhenhua Chai, Junjie Liu, Xiaolin Wei, and Xianglong Liu.
\newblock Towards accurate post-training quantization for vision transformer.
\newblock In \emph{ACM MM}, pages 5380--5388, 2022.

\bibitem[Ding et~al.(2024)Ding, Feng, Chen, Guo, and Liu]{regptq}
Yifu Ding, Weilun Feng, Chuyan Chen, Jinyang Guo, and Xianglong Liu.
\newblock Reg-ptq: Regression-specialized post-training quantization for fully quantized object detector.
\newblock In \emph{CVPR}, 2024.

\bibitem[Dong et~al.(2023)Dong, Lu, Wu, Lyu, Yuan, Tang, and Wang]{packqvit}
Peiyan Dong, Lei Lu, Chao Wu, Cheng Lyu, Geng Yuan, Hao Tang, and Yanzhi Wang.
\newblock Packqvit: Faster sub-8-bit vision transformers via full and packed quantization on the mobile.
\newblock In \emph{NeurIPS}, 2023.

\bibitem[Dosovitskiy et~al.(2021)Dosovitskiy, Beyer, Kolesnikov, Weissenborn, Zhai, Unterthiner, Dehghani, Minderer, Heigold, Gelly, Uszkoreit, and Houlsby]{vit}
Alexey Dosovitskiy, Lucas Beyer, Alexander Kolesnikov, Dirk Weissenborn, Xiaohua Zhai, Thomas Unterthiner, Mostafa Dehghani, Matthias Minderer, Georg Heigold, Sylvain Gelly, Jakob Uszkoreit, and Neil Houlsby.
\newblock An image is worth 16x16 words: Transformers for image recognition at scale.
\newblock In \emph{ICLR}, 2021.

\bibitem[Esser et~al.(2020)Esser, McKinstry, Bablani, Appuswamy, and Modha]{learnstep}
Steven~K. Esser, Jeffrey~L. McKinstry, Deepika Bablani, Rathinakumar Appuswamy, and Dharmendra~S. Modha.
\newblock Learned step size quantization.
\newblock In \emph{ICLR}, 2020.

\bibitem[Fisher and Aylmer(1922)]{fisher1922mathematical}
Fisher and Ronald Aylmer.
\newblock On the mathematical foundations of theoretical statistics.
\newblock \emph{Philosophical Transactions of the Royal Society of London. Series A, Containing Papers of a Mathematical or Physical Character}, 222:\penalty0 309--368, 1922.

\bibitem[He et~al.(2016)He, Zhang, Ren, and Sun]{resnet}
Kaiming He, Xiangyu Zhang, Shaoqing Ren, and Jian Sun.
\newblock Deep residual learning for image recognition.
\newblock In \emph{CVPR}, pages 770--778, 2016.

\bibitem[He et~al.(2017)He, Gkioxari, Doll{\'{a}}r, and Girshick]{maskrcnn}
Kaiming He, Georgia Gkioxari, Piotr Doll{\'{a}}r, and Ross~B. Girshick.
\newblock Mask {R-CNN}.
\newblock In \emph{ICCV}, pages 2980--2988, 2017.

\bibitem[Krishnamoorthi(2018)]{whitepaper}
Raghuraman Krishnamoorthi.
\newblock Quantizing deep convolutional networks for efficient inference: {A} whitepaper.
\newblock \emph{arXiv preprint arXiv:1806.08342}, 2018.

\bibitem[Krizhevsky et~al.(2012)Krizhevsky, Sutskever, and Hinton]{alexnet}
Alex Krizhevsky, Ilya Sutskever, and Geoffrey~E. Hinton.
\newblock Imagenet classification with deep convolutional neural networks.
\newblock In \emph{NeurIPS}, pages 1106--1114, 2012.

\bibitem[Li et~al.(2021)Li, Gong, Tan, Yang, Hu, Zhang, Yu, Wang, and Gu]{BRECQ}
Yuhang Li, Ruihao Gong, Xu Tan, Yang Yang, Peng Hu, Qi Zhang, Fengwei Yu, Wei Wang, and Shi Gu.
\newblock {BRECQ:} pushing the limit of post-training quantization by block reconstruction.
\newblock In \emph{ICLR}, 2021.

\bibitem[Li et~al.(2022)Li, Xu, Zhang, Cao, Gao, and Guo]{qvit}
Yanjing Li, Sheng Xu, Baochang Zhang, Xianbin Cao, Peng Gao, and Guodong Guo.
\newblock Q-vit: Accurate and fully quantized low-bit vision transformer.
\newblock In \emph{NeurIPS}, 2022.

\bibitem[Li and Gu(2023)]{ivit}
Zhikai Li and Qingyi Gu.
\newblock I-vit: Integer-only quantization for efficient vision transformer inference.
\newblock In \emph{ICCV}, pages 17019--17029, 2023.

\bibitem[Li et~al.(2023)Li, Xiao, Yang, and Gu]{RepQViT}
Zhikai Li, Junrui Xiao, Lianwei Yang, and Qingyi Gu.
\newblock Repq-vit: Scale reparameterization for post-training quantization of vision transformers.
\newblock In \emph{ICCV}, pages 17227--17236, 2023.

\bibitem[Lin et~al.(2014)Lin, Maire, Belongie, Hays, Perona, Ramanan, Doll{\'{a}}r, and Zitnick]{coco}
Tsung{-}Yi Lin, Michael Maire, Serge~J. Belongie, James Hays, Pietro Perona, Deva Ramanan, Piotr Doll{\'{a}}r, and C.~Lawrence Zitnick.
\newblock Microsoft {COCO:} common objects in context.
\newblock In \emph{ECCV}, pages 740--755, 2014.

\bibitem[Lin et~al.(2022)Lin, Zhang, Sun, Li, and Zhou]{FQViT}
Yang Lin, Tianyu Zhang, Peiqin Sun, Zheng Li, and Shuchang Zhou.
\newblock Fq-vit: Post-training quantization for fully quantized vision transformer.
\newblock In \emph{IJCAI}, pages 1173--1179, 2022.

\bibitem[Liu et~al.(2023)Liu, Yang, Dong, Keutzer, Du, and Zhang]{noisyquant}
Yijiang Liu, Huanrui Yang, Zhen Dong, Kurt Keutzer, Li Du, and Shanghang Zhang.
\newblock Noisyquant: Noisy bias-enhanced post-training activation quantization for vision transformers.
\newblock In \emph{CVPR}, pages 20321--20330, 2023.

\bibitem[Liu et~al.(2021)Liu, Lin, Cao, Hu, Wei, Zhang, Lin, and Guo]{swin}
Ze Liu, Yutong Lin, Yue Cao, Han Hu, Yixuan Wei, Zheng Zhang, Stephen Lin, and Baining Guo.
\newblock Swin transformer: Hierarchical vision transformer using shifted windows.
\newblock In \emph{ICCV}, pages 9992--10002, 2021.

\bibitem[Ma et~al.(2024)Ma, Li, Zheng, Ling, Xiao, Wang, Wen, Chao, and Ji]{oasq}
Yuexiao Ma, Huixia Li, Xiawu Zheng, Feng Ling, Xuefeng Xiao, Rui Wang, Shilei Wen, Fei Chao, and Rongrong Ji.
\newblock Outlier-aware slicing for post-training quantization in vision transformer.
\newblock In \emph{ICML}, 2024.

\bibitem[Moon et~al.(2024)Moon, Kim, Cheon, and Ham]{igqvit}
Jaehyeon Moon, Dohyung Kim, Junyong Cheon, and Bumsub Ham.
\newblock Instance-aware group quantization for vision transformers.
\newblock In \emph{CVPR}, 2024.

\bibitem[Nagel et~al.(2020)Nagel, Amjad, van Baalen, Louizos, and Blankevoort]{adaround}
Markus Nagel, Rana~Ali Amjad, Mart van Baalen, Christos Louizos, and Tijmen Blankevoort.
\newblock Up or down? adaptive rounding for post-training quantization.
\newblock In \emph{ICML}, pages 7197--7206, 2020.

\bibitem[Russakovsky et~al.(2015)Russakovsky, Deng, Su, Krause, Satheesh, Ma, Huang, Karpathy, Khosla, Bernstein, Berg, and Fei{-}Fei]{imagenet}
Olga Russakovsky, Jia Deng, Hao Su, Jonathan Krause, Sanjeev Satheesh, Sean Ma, Zhiheng Huang, Andrej Karpathy, Aditya Khosla, Michael~S. Bernstein, Alexander~C. Berg, and Li Fei{-}Fei.
\newblock Imagenet large scale visual recognition challenge.
\newblock \emph{IJCV}, 115\penalty0 (3):\penalty0 211--252, 2015.

\bibitem[Simonyan and Zisserman(2015)]{vgg}
Karen Simonyan and Andrew Zisserman.
\newblock Very deep convolutional networks for large-scale image recognition.
\newblock In \emph{ICLR}, 2015.

\bibitem[Touvron et~al.(2021)Touvron, Cord, Douze, Massa, Sablayrolles, and J{\'{e}}gou]{deit}
Hugo Touvron, Matthieu Cord, Matthijs Douze, Francisco Massa, Alexandre Sablayrolles, and Herv{\'{e}} J{\'{e}}gou.
\newblock Training data-efficient image transformers {\&} distillation through attention.
\newblock In \emph{ICML}, pages 10347--10357, 2021.

\bibitem[Wei et~al.(2022)Wei, Gong, Li, Liu, and Yu]{qdrop}
Xiuying Wei, Ruihao Gong, Yuhang Li, Xianglong Liu, and Fengwei Yu.
\newblock Qdrop: Randomly dropping quantization for extremely low-bit post-training quantization.
\newblock In \emph{ICLR}, 2022.

\bibitem[Wu et~al.(2024)Wu, Chen, Zhong, Huang, and Wang]{adalog}
Zhuguanyu Wu, Jiaxin Chen, Hanwen Zhong, Di Huang, and Yunhong Wang.
\newblock Adalog: Post-training quantization for vision transformers with adaptive logarithm quantizer.
\newblock In \emph{ECCV}, 2024.

\bibitem[Yang et~al.(2024)Yang, Gong, and Gu]{dopq}
Lianwei Yang, Haisong Gong, and Qingyi Gu.
\newblock Dopq-vit: Towards distribution-friendly and outlier-aware post-training quantization for vision transformers.
\newblock \emph{arXiv preprint arXiv:2408.03291}, 2024.

\bibitem[Yuan et~al.(2022)Yuan, Xue, Chen, Wu, and Sun]{PTQ4ViT}
Zhihang Yuan, Chenhao Xue, Yiqi Chen, Qiang Wu, and Guangyu Sun.
\newblock Ptq4vit: Post-training quantization for vision transformers with twin uniform quantization.
\newblock In \emph{ECCV}, pages 191--207, 2022.

\bibitem[Zhang et~al.(2022)Zhang, Chen, and Huang]{Cat-det}
Yanan Zhang, Jiaxin Chen, and Di Huang.
\newblock Cat-det: Contrastively augmented transformer for multi-modal 3d object detection.
\newblock In \emph{CVPR}, 2022.

\bibitem[Zhong et~al.(2024{\natexlab{a}})Zhong, Chen, Zhang, Huang, and Wang]{transforming}
Hanwen Zhong, Jiaxin Chen, Yutong Zhang, Di Huang, and Yunhong Wang.
\newblock Transforming vision transformer: Towards efficient multi-task asynchronous learner.
\newblock In \emph{NeurIPS}, 2024{\natexlab{a}}.

\bibitem[Zhong et~al.(2023)Zhong, Hu, Lin, Chen, and Ji]{isvit}
Yunshan Zhong, Jiawei Hu, Mingbao Lin, Mengzhao Chen, and Rongrong Ji.
\newblock I{\&}s-vit: An inclusive {\&} stable method for pushing the limit of post-training vits quantization.
\newblock \emph{arXiv preprint arXiv:2311.10126}, 2023.

\bibitem[Zhong et~al.(2024{\natexlab{b}})Zhong, Hu, Huang, Zhang, and Ji]{erq}
Yunshan Zhong, Jiawei Hu, You Huang, Yuxin Zhang, and Rongrong Ji.
\newblock {ERQ}: Error reduction for post-training quantization of vision transformers.
\newblock In \emph{ICML}, 2024{\natexlab{b}}.

\bibitem[Zhou et~al.(2023)Zhou, Zhang, Chen, and Huang]{Octr}
Chao Zhou, Yanan Zhang, Jiaxin Chen, and Di Huang.
\newblock Octr: Octree-based transformer for 3d object detection.
\newblock In \emph{CVPR}, 2023.

\bibitem[Zhu et~al.(2021)Zhu, Su, Lu, Li, Wang, and Dai]{detr}
Xizhou Zhu, Weijie Su, Lewei Lu, Bin Li, Xiaogang Wang, and Jifeng Dai.
\newblock Deformable {DETR:} deformable transformers for end-to-end object detection.
\newblock In \emph{ICLR}, 2021.

\end{thebibliography}
}

\clearpage
\newpage
\appendix
\renewcommand{\thetable}{\Alph{table}}
\setcounter{table}{0}

\renewcommand{\thefigure}{\Alph{figure}}
\setcounter{figure}{0}
\maketitlesupplementary

\section{Main Proofs} \label{sec:main_proof}
    The proofs of this section are based on a assumption that are generally adopted in Fisher Information Matrix (FIM), which is called the regularity condition \citep{fisher1922mathematical}. 
    
    \textbf{Regularity Condition:} Supposing that \(f\) is the likelihood with parameter \( \theta \), and \(T(x)\) is a function independent of the differentiation parameter \( \theta \), then : 
    \begin{equation}
      \frac{\partial}{\partial \theta} \int_{\mathbb{R}} T(x)f(x;\theta) \mathrm{d}x = \int_{\mathbb{R}} T(x)\frac{\partial}{\partial \theta} f(x;\theta) \mathrm{d}x. \\
    \end{equation}
\subsection{Proof of Approximation 4} \label{sec:proof_ass_4}

We believe that BRECQ adopts the gradient of KL divergence instead of the task loss gradient is based on the following theorems.

\begin{theory}
    When the model's output distribution matches the true data distribution, the Hessian matrix of the KL divergence after a small perturbation of the model is exactly equal to the expectation of the Hessian matrix of the model's likelihood function.
\end{theory}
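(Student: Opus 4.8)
The plan is to regard $\mathcal{L}_{\mathrm{KL}}(\Delta\mathbf{z}^{(b)})$ as a scalar function of the perturbation $\Delta\mathbf{z}^{(b)}$, expand it around $\Delta\mathbf{z}^{(b)}=\mathbf{0}$, and read off its Hessian directly from the definition in \cref{eq:kl_div}, then identify that Hessian with the expected Hessian of the per-sample log-likelihood loss. To lighten notation write $\bm{\delta}:=\Delta\mathbf{z}^{(b)}$ and $g(\bm{\delta}):=\mathcal{L}_{\mathrm{KL}}(\bm{\delta})=\sum_y p(y;\mathbf{z}^{(b)})\big(\log p(y;\mathbf{z}^{(b)})-\log p(y;\mathbf{z}^{(b)}+\bm{\delta})\big)$, noting that only the second logarithm depends on $\bm{\delta}$.

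First I would differentiate once: $\nabla_{\bm{\delta}}\,g(\bm{\delta})=-\sum_y p(y;\mathbf{z}^{(b)})\,\nabla_{\bm{\delta}}\log p(y;\mathbf{z}^{(b)}+\bm{\delta})$. Evaluating at $\bm{\delta}=\mathbf{0}$ and using the regularity condition to exchange $\nabla$ and the sum over $y$ gives $\nabla_{\bm{\delta}}g(\mathbf{0})=-\nabla_{\mathbf{z}^{(b)}}\sum_y p(y;\mathbf{z}^{(b)})=-\nabla_{\mathbf{z}^{(b)}}1=\mathbf{0}$, which also re-derives the vanishing first-order term underlying \cref{eq:kl_fim}.

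Next I would differentiate a second time, which yields $\nabla^2_{\bm{\delta}}\,g(\bm{\delta})=-\sum_y p(y;\mathbf{z}^{(b)})\,\nabla^2_{\bm{\delta}}\log p(y;\mathbf{z}^{(b)}+\bm{\delta})$, so at $\bm{\delta}=\mathbf{0}$ we obtain $\nabla^2_{\bm{\delta}}g(\mathbf{0})=\mathbb{E}_{y\sim p(\cdot;\mathbf{z}^{(b)})}\!\big[-\nabla^2_{\mathbf{z}^{(b)}}\log p(y;\mathbf{z}^{(b)})\big]$. This is where the hypothesis is used: since the model distribution coincides with the true data distribution, the average over $y\sim p(\cdot;\mathbf{z}^{(b)})$ equals the average over the data, so the right-hand side is precisely $\mathbb{E}[\mathbf{H}^{(\mathbf{z}^{(b)})}]$, the expectation of the Hessian of the per-sample negative log-likelihood (cross-entropy) task loss. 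Together with the classical identity $\mathbb{E}[-\nabla^2_{\mathbf{z}^{(b)}}\log p]=\mathbb{E}[(\nabla_{\mathbf{z}^{(b)}}\log p)(\nabla_{\mathbf{z}^{(b)}}\log p)^\top]=\mathbf{F}^{(\mathbf{z}^{(b)})}$ from Theorem \ref{theo:fim_hessian}, this also reconciles the result with \cref{eq:kl_fim}.

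The main obstacle is not the differentiation but keeping the two expectations aligned: the Hessian of the KL term is intrinsically a $p(\cdot;\mathbf{z}^{(b)})$-weighted average, whereas the Hessian of the task loss is a data-weighted average, so the claimed equality holds exactly only under the stated matching hypothesis, and I would make that dependence explicit rather than hiding it in notation. A secondary bookkeeping point is the sign convention — whether the ``Hessian of the likelihood function'' means $\nabla^2\log p$ or $\nabla^2(-\log p)$, the latter being what makes the equality with $\mathbb{E}[\mathbf{H}^{(\mathbf{z}^{(b)})}]$ sign-correct — together with the two places the regularity condition is invoked, once to annihilate the first-order term and once inside the FIM identity; both should be flagged where they are used.
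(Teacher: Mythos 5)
Your proposal is correct and takes essentially the same route as the paper's proof: both differentiate the KL divergence twice with respect to the perturbed parameter (noting that only the second logarithm depends on it), obtain $-\int p(x;\theta)\,\nabla^2_{\theta'}\log p(x;\theta')\,\mathrm{d}x$, and invoke the hypothesis that the model distribution matches the true data distribution to read this $p$-weighted integral as the expectation of the Hessian of the log-likelihood. Your extra bookkeeping on the sign convention and the vanishing first-order term is consistent with what the paper establishes in its proofs of Theorems 3.1 and 3.2.
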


\begin{proof}
    The Hessian matrix of the model's likelihood function is defined as:
    \begin{equation}
        \bm{H}(\theta) \triangleq \frac{\partial^2}{\partial \theta^2} \log f(X; \theta).
    \end{equation}
    As mentioned in Theorem \ref{theo:fim_hessian}, when the assumption that the model's output distribution matches the true data distribution is satisfied, the expectation of the Hessian matrix is equal to the negative Fisher Information Matrix.
    
    We adopt the integral form of the KL divergence to derive the KL divergence after a small perturbation of the model. Assume the output distribution of the model is \(p(x) = f(x;\theta)\), the output after perturbation is \(q(x) = f(x; \theta')\), where \(\theta'\) is a small perturbation w.r.t \(\theta\):
    \begin{equation}
    D_{\mathrm{KL}}(p \| q) = \int_{\mathbb{R}} f(x; \theta) \log \frac{f(x; \theta)}{f(x; \theta')} \, \mathrm{d}x.
    \end{equation}
    Similar to the definition of FIM \citep{fisher1922mathematical}, under the regularity condition, the Hessian matrix of KL divergence can be written as:
    \begin{equation}
    \begin{split}
        \frac{\partial^2}{\partial \theta_i' \partial \theta_j'} D_{\mathrm{KL}}(p \| q) 
        &= - \int_{\mathbb{R}} f(x; \theta) \left( \frac{\partial^2 \log f(x; \theta')}{\partial \theta_i' \partial \theta_j'} \right) \, \mathrm{d}x. \\
    \end{split}
    \end{equation}
    It can be observed that when \(f(x;\theta)\) matches the true data distribution, it can be regarded as the probability density function of the true data distribution. Thus, the Hessian matrix of KL divergence is equal to the expectation of the Hessian matrix of the log-likelihood of \(f(x, \theta')\). When \(\theta\) and \(\theta'\) are sufficiently close, the Hessian matrix of the KL divergence is the expectation of the Hessian matrix of the model's log-likelihood function.
\end{proof}

\subsection{Proof of Theorem 3.1} \label{sec:proof_3_1}

In order to prove Theorem 3.1, we begin with the definition of the score function.

\begin{definition}
    The Fisher Information Matrix is defined as the variance of the score function, where the score function is the gradient of the log-likelihood function.
\end{definition}

\begin{theory} \label{theo:score_exp}
    When the model's output distribution matches the true distribution, the expected value of the score function becomes \(0\).
\end{theory}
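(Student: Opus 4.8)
The plan is to prove Theorem~\ref{theo:score_exp} directly from the definition of the score function together with the regularity condition, using the fact that any probability density integrates to one. Let $f(x;\theta)$ denote the model likelihood and let the score function be $s(x;\theta) = \nabla_\theta \log f(x;\theta) = \frac{\nabla_\theta f(x;\theta)}{f(x;\theta)}$. The expected score is $\mathbb{E}[s(x;\theta)] = \int_{\mathbb{R}} s(x;\theta)\, g(x)\, \mathrm{d}x$, where $g$ is the \emph{true} data distribution over which the expectation is taken. The hypothesis of the theorem is precisely that the model matches the truth, i.e.\ $g(x) = f(x;\theta)$, so this becomes $\int_{\mathbb{R}} \frac{\nabla_\theta f(x;\theta)}{f(x;\theta)}\, f(x;\theta)\, \mathrm{d}x = \int_{\mathbb{R}} \nabla_\theta f(x;\theta)\, \mathrm{d}x$.

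The key step is then to swap the order of differentiation and integration: invoking the regularity condition with $T(x) \equiv 1$ (a function independent of $\theta$) gives $\int_{\mathbb{R}} \nabla_\theta f(x;\theta)\, \mathrm{d}x = \nabla_\theta \int_{\mathbb{R}} f(x;\theta)\, \mathrm{d}x = \nabla_\theta 1 = 0$. Hence $\mathbb{E}[s(x;\theta)] = 0$, which is the claim. I would carry out the steps in exactly this order: (i) write the expected score as an integral against the data distribution; (ii) substitute $g = f(\cdot;\theta)$ and cancel the $f(x;\theta)$ factor; (iii) apply the regularity condition to interchange $\nabla_\theta$ and $\int$; (iv) use $\int f\, \mathrm{d}x = 1$ to conclude the gradient vanishes.

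The only subtle point — and the one I would be careful to state explicitly rather than gloss over — is the distinction between the distribution that defines the score (the model $f(\cdot;\theta)$) and the distribution against which the expectation is taken (the true data distribution). Without the matching hypothesis the cancellation in step (ii) fails and the expected score need not be zero; the theorem is really the statement that this matching is exactly what makes the familiar identity $\mathbb{E}_f[\nabla_\theta \log f] = 0$ applicable here, so that it can later be chained (in the proof of Theorem~\ref{theo:fim_hessian}) to the second-moment/Hessian identity. There is no serious analytic obstacle beyond ensuring the regularity condition licenses the interchange, which the paper has already adopted as a standing assumption.
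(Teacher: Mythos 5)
Your proposal is correct and follows essentially the same route as the paper's proof: write the expected score as $\int \frac{\nabla_\theta f(x;\theta)}{f(x;\theta)}\, f(x;\theta)\,\mathrm{d}x$ using the matching hypothesis, interchange $\nabla_\theta$ and the integral via the regularity condition, and conclude from $\int f\,\mathrm{d}x = 1$. Your explicit remark that the matching of model and true distributions is what licenses integrating against $f(\cdot;\theta)$ is precisely the caveat the paper also states, just made more explicit.
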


\begin{proof}
According to the definition of the score function {\citep{fisher1922mathematical}}, we have:
\begin{equation}
\begin{split}
\mathbb{E}\left[ \left. \frac{\partial}{\partial \theta} \log f(X;\theta) \right| \theta \right] &= \int_{\mathbb{R}} \frac{\frac{\partial}{\partial \theta} f(x;\theta)}{f(x;\theta)} f(x;\theta) \mathrm{d}x \\
&= \frac{\partial}{\partial \theta} \int_{\mathbb{R}} f(x;\theta) \mathrm{d}x \\
&= \frac{\partial}{\partial \theta} 1 \\
&= 0,
\end{split}
\end{equation}
where the likelihood function \( f(X;\theta) \) denotes the probability of the model output random variable 
 \(X\), and \( f(x;\theta) \) denotes the probability density of \(X\) at \(x\). This equation holds if and only if the output distribution matches the true distribution, allowing the use of \( f(x;\theta) \) as the probability density function for integration.
\end{proof}

Based on the conclusion above, we prove Theorem 3.1 as follows.

\begin{proof}
    Based on the definition of the Fisher Information Matrix and the definition of variance \(D(X) = E(X^2) - E^2(X)\), when the expected gradient of the log-likelihood is \(0\), we have:
    \begin{equation}
        \mathbf{F}(\theta) = \mathbb{D}\left[ \left. \frac{\partial}{\partial \theta} \log f(X;\theta) \right| \theta \right] = \mathbb{E}\left[ \left( \frac{\partial}{\partial \theta} \log f(X;\theta) \right)^2 \middle| \theta \right].
    \end{equation}
    The second derivative of the log-likelihood function with respect to the parameters (\emph{i.e.}, the Hessian matrix) is:
    \begin{equation} \label{eq:hessian_is_fisher}
    \begin{split}
        & \ \ \ \ \  \frac{\partial^2}{\partial \theta^2} \log f(X;\theta) = \frac{\partial}{\partial \theta} \frac{\frac{\partial}{\partial \theta} f(X;\theta)}{f(X;\theta)} \\
        &= \frac{\left(\frac{\partial^2}{\partial \theta^2} f(X;\theta)\right) \cdot f(X;\theta) - \left( \frac{\partial}{\partial \theta} f(X;\theta) \right)^2}{f(X;\theta)^2} \\
        &= \frac{\frac{\partial^2}{\partial \theta^2} f(X;\theta)}{f(X;\theta)} - \left(\frac{\frac{\partial}{\partial \theta} f(X;\theta)}{f(X;\theta)}\right)^2 \\
        &= \frac{\frac{\partial^2}{\partial \theta^2} f(X;\theta)}{f(X;\theta)} - \left(\frac{\partial}{\partial \theta} \log f(X;\theta)\right)^2,
    \end{split}
    \end{equation}
    where the second term is the definition of FIM, and under the regularity condition, the expectation of the first term is \(0\):
    \begin{equation}
    \begin{split}
        \mathbb{E}\left[ \left. \frac{\frac{\partial^2}{\partial \theta^2} f(X;\theta)}{f(X;\theta)} \right| \theta \right] 
        &= \int_{\mathbb{R}} \frac{\frac{\partial^2}{\partial \theta^2} f(x;\theta)}{f(x;\theta)} f(x;\theta) \mathrm{d}x \\
        &= \frac{\partial^2}{\partial \theta^2} \int_{\mathbb{R}} f(x;\theta) \mathrm{d}x = 0.
    \end{split}
    \end{equation}
    Therefore, when the model's output distribution matches the true distribution, the Fisher Information Matrix is equivalent to the expectation of the negative second derivative of the log-likelihood function, i.e., the expectation of the Hessian matrix of the negative log-likelihood function.
\end{proof}

\subsection{Proof of Theorem 3.2} \label{sec:proof_3_2}

\begin{proof}
In the context of block-wise post-training quantization, we regard the KL divergence as a function of the perturbation to the block output:
\begin{align}
\begin{split}
    \mathcal{L}_{\mathrm{KL}}(\Delta\mathbf{z}^{(b)})
    &= D_{\text{KL}}(p(x;\mathbf{z}^{(b)}) \| p(x;\mathbf{z}^{(b)}+\Delta\mathbf{z}^{(b)})) \\
    &= \int_{\mathbb{R}} p(x;\mathbf{z}^{(b)}) \log \frac{p(x;\mathbf{z}^{(b)})}{p(x;\mathbf{z}^{(b)}+\Delta\mathbf{z}^{(b)})} \\
    &= \int_{\mathbb{R}} p(x;\mathbf{z}^{(b)}) \log p(x;\mathbf{z}^{(b)}) \\
    &\ \ \ \ - \int_{\mathbb{R}} p(x;\mathbf{z}^{(b)}) \log p(x;\mathbf{z}^{(b)}+\Delta\mathbf{z}^{(b)}).
\end{split}
\end{align}
We perform a second order Taylor expansion to \( \log p(x;\mathbf{z}^{(b)}+\Delta\mathbf{z}^{(b)}) \) as below:
\begin{align}
\begin{split}
    & \ \ \ \log p(x;\mathbf{z}^{(b)}+\Delta\mathbf{z}^{(b)}) \\
    &= \log p(x;\mathbf{z}^{(b)}) + \nabla_{\mathbf{z}^{(b)}} \log p(x;\mathbf{z}^{(b)})^\top \Delta\mathbf{z}^{(b)} \\
    &\ \ \ \ +\frac{1}{2} \Delta\mathbf{z}^{(b)\top} \nabla_{\mathbf{z}^{(b)}}^2 \log p(x;\mathbf{z}^{(b)}) \Delta\mathbf{z}^{(b)}
\end{split}
\end{align}
Thus, we have
\begin{equation} \label{eq:s1_s2}
    \mathcal{L}_{\mathrm{KL}}(\Delta\mathbf{z}^{(b)}) = - \Delta\mathbf{z}^{(b)\top} \cdot S_1 - \frac{1}{2} \Delta\mathbf{z}^{(b)\top} \cdot S_2 \cdot \Delta\mathbf{z}^{(b)},
\end{equation}
where
\begin{align}
\begin{split}
    S_1 &= \int_{\mathbb{R}} p(x;\mathbf{z}^{(b)}) \nabla_{\mathbf{z}^{(b)}} \log p(x;\mathbf{z}^{(b)}) \\
    S_2 &= \int_{\mathbb{R}} p(x;\mathbf{z}^{(b)}) \nabla_{\mathbf{z}^{(b)}}^2 \log p(x;\mathbf{z}^{(b)}).
\end{split}
\end{align}

According to the properties of logarithmic function differentiation, we can deduce the following:
\begin{equation} \label{eq:gl}
    \nabla_{\mathbf{z}^{(b)}} p(x;\mathbf{z}^{(b)}) = p(x;\mathbf{z}^{(b)}) \nabla_{\mathbf{z}^{(b)}} \log p(x;\mathbf{z}^{(b)}).
\end{equation}
Thus, under the regularity condition, we have:
\begin{align}
\begin{split}
    S_1
    &= \int_{\mathbb{R}} p(x;\mathbf{z}^{(b)}) \nabla_{\mathbf{z}^{(b)}} \log p(x;\mathbf{z}^{(b)}) \\
    &= \int_{\mathbb{R}} \nabla_{\mathbf{z}^{(b)}} p(x;\mathbf{z}^{(b)}) \\
    &= \nabla_{\mathbf{z}^{(b)}} \int_{\mathbb{R}} p(x;\mathbf{z}^{(b)}) \\
    &= \nabla_{\mathbf{z}^{(b)}} 1 \\
    &= 0.
\end{split}
\end{align}
For \(S_2\), according to \cref{eq:gl}, the following equations hold:
\begin{align}
\begin{split}
    &\ \ \ \ \ \nabla_{\mathbf{z}^{(b)}}^2 p(x;\mathbf{z}^{(b)}) \\
    &= \nabla_{\mathbf{z}^{(b)}} (p(x;\mathbf{z}^{(b)}) \nabla_{\mathbf{z}^{(b)}} \log p(x;\mathbf{z}^{(b)})) \\
    &= \nabla_{\mathbf{z}^{(b)}} p(x;\mathbf{z}^{(b)}) \nabla_{\mathbf{z}^{(b)}} \log p(x;\mathbf{z}^{(b)})^\top \\
    &\ \ \ \ + p(x;\mathbf{z}^{(b)}) \nabla_{\mathbf{z}^{(b)}}^2 \log p(x;\mathbf{z}^{(b)}) \\
    &= p(x;\mathbf{z}^{(b)}) \nabla_{\mathbf{z}^{(b)}} \log p(x;\mathbf{z}^{(b)}) \nabla_{\mathbf{z}^{(b)}} \log p(x;\mathbf{z}^{(b)})^\top \\
    &\ \ \ \ + p(x;\mathbf{z}^{(b)}) \nabla_{\mathbf{z}^{(b)}}^2 \log p(x;\mathbf{z}^{(b)}).
\end{split}
\end{align}
Therefore, \(S_2\) can be written as
\begin{align}
\begin{split}
    S_2 &= \int_{\mathbb{R}} p(x;\mathbf{z}^{(b)}) \nabla_{\mathbf{z}^{(b)}}^2 \log p(x;\mathbf{z}^{(b)}) \\
    &= \int_{\mathbb{R}} \nabla_{\mathbf{z}^{(b)}}^2 p(x;\mathbf{z}^{(b)}) \\
    &\ \ \ \ - \int_{\mathbb{R}} p(x;\mathbf{z}^{(b)}) \nabla_{\mathbf{z}^{(b)}} \log p(x;\mathbf{z}^{(b)}) \nabla_{\mathbf{z}^{(b)}} \log p(x;\mathbf{z}^{(b)})^\top.
\end{split}
\end{align}
Under the regularity condition, we can derive that
\begin{align}
\begin{split}
    \int_{\mathbb{R}} \nabla_{\mathbf{z}^{(b)}}^2 p(x;\mathbf{z}^{(b)}) &= \nabla_{\mathbf{z}^{(b)}}^2 \int_{\mathbb{R}} p(x;\mathbf{z}^{(b)}) \\
    &= \nabla_{\mathbf{z}^{(b)}}^2 1 \\
    &= 0.
\end{split}
\end{align}
Thus,
\begin{align}
\begin{split}
    S_2 &= - \int_{\mathbb{R}} p(x;\mathbf{z}^{(b)}) \nabla_{\mathbf{z}^{(b)}} \log p(x;\mathbf{z}^{(b)}) \nabla_{\mathbf{z}^{(b)}} \log p(x;\mathbf{z}^{(b)})^\top \\
    &= - \mathbf{F}^{(\mathbf{z}^{(b)})}.
\end{split}
\end{align}
By substituting \(S_1\) and \(S_2\) into \cref{eq:s1_s2}, we have:
\begin{equation}
    \mathcal{L}_{\mathrm{KL}}(\Delta\mathbf{z}^{(b)}) = \frac{1}{2} \Delta\mathbf{z}^{(b)\top} \mathbf{F}^{(\mathbf{z}^{(b)})} \Delta\mathbf{z}^{(b)}.
\end{equation}
\end{proof}

\subsection{Derivation of Eq. (15)} \label{sec:deriv_u}

Given
\begin{align}
    \mathbf{F}^{(\mathbf{z}^{(b)})} &= \bm{uu}^\top, \\
    \nabla \mathcal{L}_{\mathrm{KL}}(\Delta\mathbf{z}^{(b)}) &= \mathbf{F}^{(\mathbf{z}^{(b)})} \Delta\mathbf{z}^{(b)},
\end{align}
where \(\bm{u},\nabla \mathcal{L}_{\mathrm{KL}}(\Delta\mathbf{z}^{(b)}),\Delta\mathbf{z}^{(b)}\in \mathbb{R}^{a\times 1}\), we define a scalar \(\alpha = \bm{u}^\top\cdot \Delta\mathbf{z}^{(b)}\) such that
\begin{equation}
    \nabla \mathcal{L}_{\mathrm{KL}}(\Delta\mathbf{z}^{(b)}) = \alpha \bm{u}.
\end{equation}
Then, we can deduce the following
\begin{equation}
    \bm{u}^\top = \frac{\left(\nabla \mathcal{L}_{\mathrm{KL}}(\Delta\mathbf{z}^{(b)})\right)^\top}{\alpha}.
\end{equation}
Thus,
\begin{align}
    \alpha = \sqrt{\left(\nabla \mathcal{L}_{\mathrm{KL}}(\Delta\mathbf{z}^{(b)})\right)^\top \Delta\mathbf{z}^{(b)}},\\
    \bm{u} = \frac{\nabla \mathcal{L}_{\mathrm{KL}}(\Delta\mathbf{z}^{(b)})}{\sqrt{\left(\nabla \mathcal{L}_{\mathrm{KL}}(\Delta\mathbf{z}^{(b)})\right)^\top \Delta\mathbf{z}^{(b)}}}.
\end{align}

\subsection{Proof of Corollary 3.1} \label{sec:proof_3_3}

\begin{proof}
Given
\begin{align}
    \mathbf{F}^{(\mathbf{z}^{(b)})} &= \bm{uu}^\top, \\
    \nabla \mathcal{L}_{\mathrm{KL}}(\Delta\mathbf{z}^{(b)}) &= \mathbf{F}^{(\mathbf{z}^{(b)})} \Delta\mathbf{z}^{(b)},
\end{align}
we can deduce the following
\begin{equation} \label{eq:duichen}
    \Delta\mathbf{z}^{(b)\top} \nabla \mathcal{L}_{\mathrm{KL}}(\Delta\mathbf{z}^{(b)}) = \Delta\mathbf{z}^{(b)\top} \bm{uu}^\top \Delta\mathbf{z}^{(b)}.
\end{equation}
Since the right-hand side of \cref{eq:duichen} is a symmetric matrix, the left-hand side should also be symmetric:
\begin{equation} \label{eq:duichen2}
    \Delta\mathbf{z}^{(b)\top} \nabla \mathcal{L}_{\mathrm{KL}}(\Delta\mathbf{z}^{(b)}) = \left (  \nabla \mathcal{L}_{\mathrm{KL}}(\Delta\mathbf{z}^{(b)}) \right )^\top \Delta\mathbf{z}^{(b)}.
\end{equation}
When \(k=1\), both sides of \cref{eq:duichen2} are scalars, implying that \cref{eq:duichen2} naturally holds. When \(k>1\), since \(\Delta\mathbf{z}^{(b)}\) and \(\mathcal{L}_{\mathrm{KL}}(\Delta\mathbf{z}^{(b)})\) are not directly related, we cannot guarantee their symmetry.

As a consequence, it is generally difficult to find a \(\bm{u}\) such that \(\mathbf{F}^{(\mathbf{z}^{(b)})} = \bm{uu}^\top\) satisfying Eq. (\ref{eq:begin}) in most cases.
\end{proof}

\section{More Experiments}\label{sec:FIMA-app}
As both FIM approximation (FIMA) and reconstruction steps depend on calibration data, we separately evaluate their performance utilizing different numbers of samples. As shown in Table~\ref{tab:ablation_sample_fim}, the accuracy using FIMA increases as sample size grows, but is generally robust to the sample size. However, the reconstruction step is more sensitive to the number of calibration samples. 

\begin{table}[h]
\centering
\caption{Ablation results (\%) w.r.t. the samples size with W3/A3 on ImageNet.}
\scriptsize
\begin{tabular}{ccccccc}
   \toprule
   \multirow{2} * {\textbf{Sample Size}} & \multicolumn{3}{c}{\textbf{In FIMA Step}} & \multicolumn{3}{c}{\textbf{In Reconstruction Step}} \\
    \cmidrule(lr){2-4}\cmidrule(lr){5-7}
   ~ & ViT-S& DeiT-S& Swin-S & ViT-S& DeiT-S&Swin-S 
\\
   \midrule
   \textbf{128}& 63.52& 68.99& 77.10
& 49.64& 64.12&71.71
\\
 \textbf{256}& 63.18& 69.10&77.00 
& 56.02& 66.21&74.12
\\
 \textbf{512}& 63.61& 69.14&77.18
 & 60.45& 67.87&75.8
\\
 \textbf{1024}& 64.09& 69.13&77.26
& 64.09& 69.13&77.26
\\
    \bottomrule
\end{tabular}

\label{tab:ablation_sample_fim}
\end{table}

Since the Fisher Information Matrix (FIM) captures global information, its computation involves averaging over the sample dimension. Theoretically, a larger sample size leads to a more accurate approximation due to reduced sampling error. However, since the averaging process mitigates the impact of individual sample variations, the difference is not particularly significant. In fact, even using a single sample for approximation can still yield an acceptable level of accuracy. However, as shown in \cref{tab:ablation_sample_fim}, directly altering the overall sample size leads to a more substantial accuracy change, as the reconstruction process in Adaround is more sensitive to the number of samples.

\end{document}